\theoremstyle{plain}
\newtheorem{theorem}{Theorem}[section]
\newtheorem{proposition}[theorem]{Proposition}
\newtheorem{lemma}[theorem]{Lemma}
\newtheorem{corollary}[theorem]{Corollary}
\theoremstyle{definition}
\newtheorem{definition}[theorem]{Definition}
\theoremstyle{remark}
\newtheorem{remark}[theorem]{Remark}
\newcommand{\relu}{\text{ReLU}}
\title{Leveraging KANs for Expedient Training of Multichannel MLPs via Preconditioning and Geometric Refinement}
\author{%
  Jonas A. ~Actor
    \\
  Center for Computing Research\\
  Sandia National Laboratories\\
  Albuquerque, NM 87123 \\
  \texttt{jaactor@sandia.gov} \\
  \And
  Graham Harper \\
  Center for Computing Research\\
  Sandia National Laboratories\\
  Albuquerque, NM 87123 \\
  \AND
  Ben Southworth \\
  Los Alamos National Laboratories \\
  Los Alamos, NM 87545 \\
  \And
  Eric C. Cyr \\
  Center for Computing Research \\
  Sandia National Laboratories \\
  Albuquerque, NM 87123 \\
}
\begin{document}

\maketitle

\begin{abstract}
Multilayer perceptrons (MLPs) are a workhorse machine learning architecture, used in a variety of modern deep learning frameworks. However, recently Kolmogorov-Arnold Networks (KANs) have become increasingly popular due to their success on a range of problems, particularly for scientific machine learning tasks. 
In this paper, we exploit the relationship between KANs and multichannel MLPs to gain structural insight into how to train MLPs faster. 
We demonstrate the KAN basis (1) provides geometric localized support, and (2) acts as a preconditioned descent in the ReLU basis, overall resulting in expedited training and improved accuracy.
Our results show the equivalence between free-knot spline KAN architectures, and a class of MLPs that are refined geometrically along the channel dimension of each weight tensor. We exploit this structural equivalence to define a hierarchical refinement scheme that dramatically accelerates training of the multi-channel MLP architecture. We show further accuracy improvements can be had by allowing the $1$D locations of the spline knots to be trained simultaneously with the weights. These advances are
demonstrated on a range of benchmark examples for regression and scientific machine learning.
\end{abstract}

\section{Introduction}
\label{Introduction}
Multilayer perceptrons (MLPs) \citep{mcculloch1943logical, rosenblatt1958perceptron, pinkus1999approximation} are a classical deep learning architecture, which exploit the composition of affine maps with a nonlinear scalar activation function; MLP architectures or blocks appear in many state-of-the-art applications, including (but not limited to) variational autoencoders \citep{kingma2013auto} and transformers \cite{vaswani2017attention}.
An alternative architecture,
Kolmogorov-Arnold Networks (KANs) \citep{liu2024kan}, have gained tremendous popularity in recent literature, having successfully been used for a range of tasks, including computer vision \citep{cang2024can, cacciatore2024preliminary, cheon2024demonstrating}, time series analysis \citep{vaca2024kolmogorov, dong2024kolmogorov}, scientific machine learning \citep{abueidda2024deepokan, toscano2024pinns, wu2024kolmogorov, koenig2024kan, jacob2024spikans, patra2024physics, howard2024finite, rigas2024adaptive}, graph analysis \citep{kiamari2024gkan, bresson2024kagnns}, and beyond; see \citep{somvanshi2024survey, hou2024comprehensive} and references therein for a list of versions, extensions, and applications of KANs. 

The structural similarity between MLPs and KANs has been noted multiple times in the literature
\citep{pinkus1999approximation, igelnik2003kolmogorov, leni2013kolmogorov, he2024mlp, qiu2025powermlp}.  Both invoke approximation properties with arguments that expand upon the Kolmogorov Superposition Theorem (KST) \citep{kolmogorov1957representation}, also called the Kolmogorov-Arnold Superposition Theorem, which states there exist functions $\phi_{pq} \in C([0,1])$, for $p=1,\dots,n$ and $q=1,\dots,2n+1$, such that for any function $f \in C([0,1]^n)$, there are functions $\varphi_{q} \in C(\mathbb{R})$ such that 
\begin{equation}
f(x_1,\dots,x_n) = \sum_{q=1}^{2n+1} \varphi_{q}\left( \sum_{p=1}^n \phi_{pq}(x_p) \right).
\end{equation}
KANs, inspired by this statement, replace the functions $\varphi_{q}$ and $\phi_{pq}$, which are classically not differentiable at a dense set of points and only Holder continuous \citep{braun2009constructive, sprecher1993universal, actor2018computation}, with learned functions expressed in a spline basis \citep{liu2024kan}. Function composition is used to increase the depth of the resulting representation: for a layer $\ell$ with $P$ inputs and $Q$ outputs define
\begin{equation}
    \label{eq:kan_layer}
 x^{(\ell+1)}_q = \sum_{p=1}^P \phi^{(\ell)}_{pq}( x^{(\ell)}_p ) \mbox{ for } q=1,\dots,Q,
\end{equation}
with $\phi^{(\ell)}_{pq}$ trained from data. MLPs are similarly architectural cousins to the restatement of KST provided by \citep{lorentz1962metric, sprecher1993universal, laczkovich2021superposition} where the same statement for $f \in C([0,1]^n)$ can be made using a single inner function $\phi$ and single outer function (i.e., a fixed activation) $\varphi$ with appropriately chosen shifts $\varepsilon, \delta$ and scaling factors $\lambda_p$, so that
\begin{equation}
\label{eq:kst_sprecher}
    f(x_1,\dots,x_n) = \sum_{q=1}^{2n+1} \varphi\left( \sum_{p=1}^n \lambda_p \phi(x_p + \varepsilon q) + \delta q \right).
\end{equation}
Due to their similarities, KANs tend to be comparable to MLPs for learning tasks, with the same asymptotic complexity and convergence rates \citep{gao2024convergence} and performance in  variety of head-to-head comparisons \citep{zeng2024kan, shukla2024comprehensive, yu2024kan}.


\subsection{Contributions}

Despite the similarities, KANs provide additional structure that can be exploited for efficient training, which MLPs lack: the explicit dependence on an analytical basis allows for the control of approximation properties at each layer. For this work, KANs with a spline basis include spatial locality of activation functions and features at each layer. 

Therefore, we seek to exploit features of KANs that will enable the efficient training of multichannel MLPs.
To do so, this paper makes the following contributions:
\begin{enumerate}
    \item We exploit the relationship between B-splines and ReLU activations to reformulate KANs in the language of multichannel MLPs. We show that KAN architectures are equivalent to multichannel MLPs, where the biases are fixed to match the spline knots.
    \textbf{(Section \ref{sec:formulation})}
    \item We show that KANs expressed with the B-spline basis are a preconditioned version of multichannel MLPs. This preconditioning improves the condition number of the Hessian of the resulting linear system, providing an explanation for the rule-of-thumb that ``feature localization is better for training'' seen particularly in many image processing machine learning tasks \citep{lowe2004distinctive, Xu_2024} , and we comment on the implications of this result in relation to the spectral bias of MLPs. \textbf{(Section \ref{sec:preconditioning})}
    \item We build hierarchical methods for the refinement of B-spline KANs, accelerating the grid transfer methods posed in \citep{liu2024kan}. This method is fast enough to be done during training, with significant improvements in training accuracies as a result. \textbf{(Section \ref{sec:multigrid})}
    \item We construct a method to parameterize free-knot spline KANs. This avoids the elastic data-driven grid deformation strategies employed elsewhere and solves some of the questions raised regarding the dependence of a learned KAN solution on the underlying spline grid \citep{howard2024multifidelity, somvanshi2024survey, abueidda2024deepokan}. \textbf{(Section \ref{sec:free-knot})}
\end{enumerate}

\section{Multichannel MLPs and KANs}
\label{sec:formulation}
In the original KAN paper and many subsequent works, the KAN activations are built from a B-spline basis. We start with the same formulation, although we drop the conventional MLP tail added to most KAN layer implementations. 
Other bases such as wavelets and Chebyshev polynomials have since been proposed; see Appendix \ref{sec:other-bases} for further discussion.

In what follows, let $T = \{t_i\}_{i=1-r:n+r-1}$ be a strictly ordered set of $n+2r-1$ spline knots, where $t_i < t_{i+1}$, with $t_0 = a$ and $t_n = b$.
Let $\mathbb{P}^k$ denote the space of polynomials of degree $k$.
We recall some terminology and results regarding the theory of splines, c.f. \citep{chui1988multivariate}.
\begin{definition}
The \emph{spline space of order $r$ with knots $T$}, denoted by $S_r(T)$, is the set of functions
\begin{equation}
S_r(T) = \{ f \in C^{r-2}([a,b]) \,\,:\,\, f\lvert_{[t_i, t_{i+1}]} \in \mathbb{P}^{r-1} \}.
\end{equation}
\end{definition}
\begin{remark}
The spline space of order $r$ consists of piecewise polynomials of degree $r-1$ (subject to additional smoothness constraints).
\end{remark}

\begin{lemma}[\citet{chui1988multivariate}]
Define the functions
$b^{[1]}_i(x) = \begin{cases} 1 & x \in [t_i, t_{i+1}] \\ 0 & \text{else} \end{cases}.$
Following \citep{deboor1978practical}, recursively define
\begin{equation}\label{eq:br-def}
    b^{[r]}_i(x) = \frac{x - t_i}{t_{i+r-1} - t_i} b^{[r-1]}_i(x) + \frac{t_{i+r} - x}{t_{i+r} - t_{i+1}} b^{[r-1]}_{i+1}(x).
\end{equation}
Then,
    $B_S = \{ b^{[r-1]}_i(x) \}_{i=1-r:n-1}$
is a basis for $S_r(T)$.
\end{lemma}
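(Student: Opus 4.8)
The plan is to show $B_S$ is a basis in three moves: membership ($B_S \subseteq S_r(T)$), a dimension count, and linear independence. Since the number of candidate functions will turn out to equal $\dim S_r(T)$, establishing linear independence will immediately upgrade to the spanning property, giving a basis. (Throughout I read the superscript as the spline \emph{order}, so that the recursion \eqref{eq:br-def} outputs the order-$r$ functions $b^{[r]}_i$, of degree $r-1$, which are the ones spanning $S_r(T)$; these carry the index range $i = 1-r,\dots,n-1$.)

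First I would prove membership by induction on the order. The recursion \eqref{eq:br-def} multiplies each degree-$(r-2)$ piece $b^{[r-1]}_i, b^{[r-1]}_{i+1}$ by an affine factor, so every $b^{[r]}_i$ restricts to a polynomial of degree at most $r-1$ on each knot interval $[t_j,t_{j+1}]$, and an accompanying induction on the supports shows $b^{[r]}_i$ vanishes outside $[t_i,t_{i+r}]$. The nontrivial part is the smoothness claim $b^{[r]}_i \in C^{r-2}$. For this I would prove, again by induction, the differentiation identity
\[
\frac{d}{dx} b^{[r]}_i(x) = (r-1)\left( \frac{b^{[r-1]}_i(x)}{t_{i+r-1}-t_i} - \frac{b^{[r-1]}_{i+1}(x)}{t_{i+r}-t_{i+1}} \right),
\]
whose right-hand side lies in $C^{r-3}$ by the inductive hypothesis, forcing $b^{[r]}_i \in C^{r-2}$ and hence $b^{[r]}_i \in S_r(T)$. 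This step is the main obstacle: the whole content of the lemma is that the seemingly ad hoc recursion manufactures exactly one extra order of smoothness at each simple knot, and verifying the derivative identity requires carefully matching one-sided derivatives precisely at the knots where $b^{[r-1]}$ is itself only $C^{r-3}$.

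Second I would count dimensions. On $[a,b]=[t_0,t_n]$ there are $n$ subintervals and $n-1$ interior knots; a degree-$(r-1)$ piecewise polynomial carries $nr$ free coefficients, while the $C^{r-2}$ matching at each interior knot imposes $r-1$ constraints, so $\dim S_r(T) = nr - (n-1)(r-1) = n+r-1$. The index set $i = 1-r,\dots,n-1$ has exactly $n+r-1$ elements, so $|B_S| = \dim S_r(T)$.

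Finally I would establish linear independence through the local-support structure. On any interval $[t_j,t_{j+1}]$ only the $r$ functions $b^{[r]}_{j-r+1},\dots,b^{[r]}_j$ are nonzero, and I would show these restrict to $r$ linearly independent polynomials, most cleanly by invoking Marsden's identity, which expresses every monomial (hence all of $\mathbb{P}^{r-1}$) locally as a combination of these B-splines; thus their restrictions span the $r$-dimensional space $\mathbb{P}^{r-1}$ and are independent there. Given any relation $\sum_i c_i b^{[r]}_i \equiv 0$, sweeping the intervals from left to right then forces the coefficients to vanish one at a time, a triangular elimination ordered by the left endpoints of the supports. Having $n+r-1$ linearly independent functions inside a space of dimension $n+r-1$, $B_S$ must also span, and is therefore a basis for $S_r(T)$.
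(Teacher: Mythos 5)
The paper offers no proof of this lemma---it is stated as a classical result cited to \citet{chui1988multivariate} and \citet{deboor1978practical}---and your argument (membership via the B-spline derivative recurrence, the dimension count $\dim S_r(T) = nr-(n-1)(r-1) = n+r-1$, and linear independence from local support together with Marsden's identity) is correct and is essentially the standard proof given in those references. Your decision to read the superscript as the spline \emph{order}, so that the degree-$(r-1)$ functions $b^{[r]}_i$ are the ones spanning $S_r(T)$, is the right resolution of the statement's off-by-one notation; this is confirmed by Appendix~\ref{app:ntk_proof}, where the change of basis is between $b^{[r]}_i$ and $\relu(\cdot - t_i)^{r-1}$.
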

\begin{lemma}[\citet{chui1988multivariate}]
 \label{lemma:basis}
The set
        $B_R = \{ \relu( \cdot - t_i)^{r-1} \}_{i=1-r}^{n-1}$
is an equivalent basis for $S_r(T)$.
\end{lemma}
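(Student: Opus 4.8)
The plan is to establish the three standard criteria that make $B_R$ a basis of $S_r(T)$: that every element of $B_R$ lies in $S_r(T)$, that $B_R$ is linearly independent, and that $|B_R|$ equals $\dim S_r(T)$. The cardinality is immediate. The index $i$ runs from $1-r$ to $n-1$, so $|B_R| = n+r-1$, which coincides with $|B_S|$; since the preceding lemma guarantees $B_S$ is a basis, $\dim S_r(T) = n+r-1$, and it therefore suffices to verify membership and linear independence, as a linearly independent subset of the right cardinality is automatically a basis.

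For membership, I would show that each truncated power $\relu(x-t_i)^{r-1}$ is a piecewise polynomial of degree $r-1$ whose only breakpoint is the knot $t_i$, and that it is $C^{r-2}$ across that knot: the polynomial $(x-t_i)^{r-1}$ together with its first $r-2$ derivatives vanish at $x=t_i$, so the two polynomial pieces ($0$ to the left and $(x-t_i)^{r-1}$ to the right) match to order $r-2$, while only the $(r-1)$-th derivative jumps. Hence $\relu(x-t_i)^{r-1}\in S_r(T)$ for each $i$, so $B_R\subseteq S_r(T)$.

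For linear independence I would argue by left-to-right peeling across the knots. Restricted to $[a,b]$, note that for $i\le 0$ we have $t_i\le t_0=a$, so $\relu(x-t_i)^{r-1}=(x-t_i)^{r-1}$ is a genuine degree-$(r-1)$ polynomial on $[a,b]$, whereas for $i\ge 1$ the function vanishes identically on $[t_0,t_i]$. Suppose $\sum_{i=1-r}^{n-1} c_i\,\relu(x-t_i)^{r-1}\equiv 0$ on $[a,b]$. On the leftmost subinterval $[t_0,t_1]$ only the $r$ polynomial terms with $i\le 0$ survive, giving $\sum_{i=1-r}^{0} c_i(x-t_i)^{r-1}=0$ there; expanding each summand by the binomial theorem and matching the coefficients of $x^{r-1},\dots,x^0$ yields the homogeneous system $\sum_{i\le 0} c_i\, t_i^{m}=0$ for $m=0,\dots,r-1$, whose coefficient matrix is Vandermonde in the distinct shifts $t_{1-r}<\cdots<t_0$ and hence invertible, forcing $c_i=0$ for all $i\le 0$. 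Advancing to $[t_1,t_2]$, every term with $i\ge 2$ still vanishes and every $c_i$ with $i\le 0$ is already zero, so the identity reduces to $c_1(x-t_1)^{r-1}=0$ on an interval of positive length, whence $c_1=0$. Inductively, on $[t_j,t_{j+1}]$ the coefficients of lower index have already been shown to vanish and the higher-index terms are zero, leaving $c_j(x-t_j)^{r-1}=0$ and hence $c_j=0$; after $j=n-1$ all coefficients vanish.

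Combining the three facts shows $B_R$ is a linearly independent subset of $S_r(T)$ of dimension $n+r-1$, hence a basis, which is the asserted equivalence with $B_S$. The main obstacle I anticipate is the bookkeeping at the left end: one must recognize that the $r$ ``ghost'' knots $t_{1-r},\dots,t_0$ contribute ordinary polynomials on $[a,b]$ and supply the Vandermonde argument confirming that these $r$ shifted powers already span $\mathbb{P}^{r-1}$. Once this base case is secured, the inductive peeling across the interior knots $t_1,\dots,t_{n-1}$ is routine.
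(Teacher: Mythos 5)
Your proof is correct. The membership check (the two pieces of $\relu(x-t_i)^{r-1}$ agree to order $r-2$ at $t_i$), the dimension count $|B_R|=n+r-1=\dim S_r(T)$, and the left-to-right peeling argument for independence -- Vandermonde in the $r$ distinct shifts $t_{1-r}<\cdots<t_0$ on $[t_0,t_1]$, then one new coefficient per interior knot -- are all sound, and together they do establish that $B_R$ is a basis. The paper, however, takes a different route: it states the lemma as a citation to Chui and, in Appendix~\ref{app:ntk_proof}, supplies what amounts to a constructive proof by deriving the explicit change-of-basis matrix $A^{[r]}$ with $B^{[r]}=A^{[r]}\Phi^{[r]}$ via the Cox--de~Boor recursion (using the identity $(x-a)\relu(x-b)^r=\relu(x-b)^{r+1}+(b-a)\relu(x-b)^r$ and a smoothness argument to cancel the lower-order truncated powers); since $A^{[r]}$ is upper triangular with nonzero diagonal, invertibility follows and $B_R$ inherits the basis property from $B_S$. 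The trade-off is clear: your argument is more elementary and self-contained, needing nothing but the definition of $S_r(T)$, whereas the paper's construction yields the concrete matrix $A^{[r]}$ that is indispensable later for the preconditioning and NTK analysis (Theorem~\ref{thm:ntk_cob}), which a pure existence/independence proof does not provide. One minor caveat: your smoothness argument as phrased assumes $r\ge 2$; for $r=1$ the functions are Heaviside steps and $C^{-1}$ imposes no matching condition, so the membership step is vacuous there rather than wrong.
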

As a result of this basis equivalence, we can express our KAN layers equivalently in either $B_S$ or $B_R$. For $B_S$, i.e. the spline basis, there is a weight 3-tensor $\widetilde{W}^{(\ell)}$ such that
\begin{equation} \label{eq:kan-spline-basis}
x_q^{(\ell+1)} = \sum_{p=1}^P \sum_{i=1-r}^{n-1} \widetilde{W}^{(\ell)}_{qpi} \, b_i^{[r-1]}(x_p^{(\ell)}).
\end{equation}
Equivalently, since each activation $\phi_{pq}^{(\ell)} \in S_r(T)$, there exists a weight 3-tensor $W^{(\ell)}$ so that
\begin{equation}
    \label{eq:kan_relu}
   x_q^{(\ell+1)} = \sum_{p=1}^P \sum_{i=1-r}^{n-1} W^{(\ell)}_{qpi} \, \relu(x^{(\ell)}_p - t_i)^{r-1} . 
\end{equation}
Moreover, since the bases $B_S$ and $B_R$ are equivalent, there exists an invertible linear mapping $A \in \mathbb{R}^{(r+n-1) \times (r+n-1)}$ between the two bases, such that
\begin{equation}
W^{(\ell)} = \widetilde{W}^{(\ell)} \times_3 A, 
\end{equation}
where $\times_j$ denotes the $j$-mode tensor product.
The change of basis matrix for arbitrary order $r$ and uniform or nonuniform knots is derived explicitly in Appendix \ref{app:ntk_proof}.

The case $r=2$ explicitly yields the standard ReLU activation. The connection between ReLU MLPs and continuous piecewise linear functions has been made previously \citep{arora2018understanding, he2020relu, opschoor2020deep, devore2021neural}. We present this connection again, in light of the equivalent basis result for splines and ReLU networks for use specifically in KAN architectures. 
In what follows, we apply the linear weights at each layer \textit{after} the nonlinear activation, instead of inside. This notational shift is for convenience of exposition, and does not result in a change of the architecture. For a detailed analysis of how this notation yields the same MLP, see Appendix \ref{app:notational_shift}.

An MLP layer $\ell$ with $P$ input neurons and $Q$ output neurons with a $\relu$ activation can be written as
\begin{equation} \label{eq:MLP-equivalent}
    \begin{split}
    x_q^{(\ell+1)} &= \sum_{p=1}^P W_{qp}^{(\ell)} \relu(x_p^{(\ell)} - t_p) 
    = \sum_{p=1}^P \sum_{i=1}^P \delta_{i=p} W_{qpi}^{(\ell)} \relu( x_p^{(\ell)} - t_i ),
    \end{split}
\end{equation}
where $\delta$ denotes the Dirac delta function.
Comparing this to Equation \eqref{eq:kan_relu}, we find two informative perspectives. First, we see that MLPs follow a similar architecture to KANs but use a weight tensor that is restricted into the low-rank factorization $W^{(\ell)}_{qpi} = \delta_{i=p} W^{(\ell)}_{qp}$. This factorization limits the expressiveness of standard MLPs compared to KANs with the same number of hidden nodes (although the KAN architecture will have more parameters). 


Second, we see that a MLP layer and a KAN layer differ in that a KAN layer has multiple channels, each featuring a different bias to perform a translation as defined by the spline grid knots. The channel dimension is labeled by $i$  in Eqs.~\eqref{eq:MLP-equivalent} and~\eqref{eq:kan-spline-basis} defining the layer's action. For a KAN the effect is to provide greater localization of features in space based on the compact support of the spline basis. In principle a classic ReLU MLP with a similar parameter count includes in its approximation space the KAN. However, the global nature of the ReLU basis implies that perturbations to coefficients weight and bias parameters have a global impact on the network output. This impedes both interpretability of the weights, and relates to the spectral bias observed in training. 

Each architecture induces separating hyperplanes in the state space at layer $\ell$ by nature of where ReLU activation functions change from zero to positive. This viewpoint is illustrated in Figure \ref{fig:spline-support}, looking at the first layer of a KAN with inputs in $\mathbb{R}^2$, and comparing to an MLP in the form of Equation \eqref{eq:MLP-equivalent}. For standard MLPs that take the conventional ordering of affine layer followed by the nonlinear activation, there are $Q$ separating hyperplanes (i.e. one from each neuron) and $\tiny{\sum_{i=0}^P \begin{pmatrix} Q \\ i \end{pmatrix}}$ distinct subregions. In the reordered MLP case, we have a single bias per dimension of the input, which results in $P$ orthogonal separating hyperplanes that subdivide space into $2^P$ convex subregions. In contrast, for KANs, each axis is partitioned on a grid, with an additional hyperplane at each spline knot, resulting in $(n-1)P$ hyperplanes that subdivide space into $n^P$ convex subregions; each parallel hyperplane arises from the extra channels that encode the shifts from each spline knot.

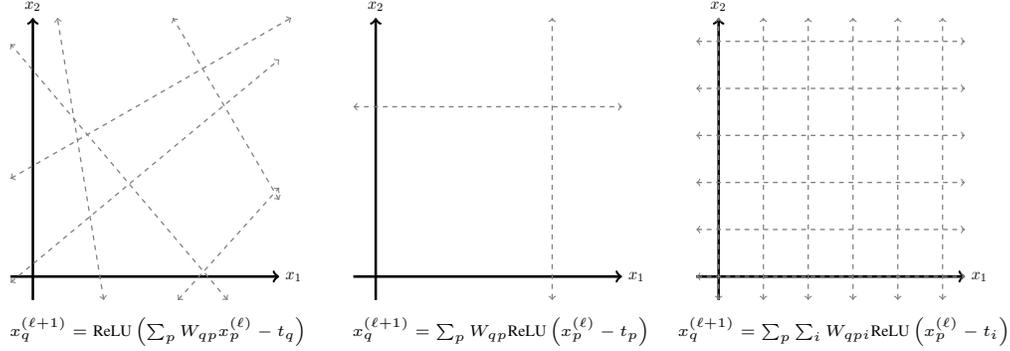
\begin{figure*}[tbp!]
\centering
\begin{subfigure}[b]{0.32\textwidth}
\centering
\resizebox{1.6in}{1.6in}{
\begin{tikzpicture}
\pgfmathsetseed{123} 
\def\rows{5}
\def\cols{5}

\draw[->, ultra thick] (-0.5,0) -- (\cols + 0.5,0) node[right] {$x_1$};
\draw[->, ultra thick] (0,-0.5) -- (0,\rows + 0.5) node[above] {$x_2$};

\pgfmathsetmacro{\xpos}{rand*5} 
\pgfmathsetmacro{\ypos}{rand*5} 
\draw[dashed, gray, thick, <->] (0.5*\xpos, -0.5) -- (\ypos, 5.5); 

\pgfmathsetmacro{\xpos}{rand*5} 
\pgfmathsetmacro{\ypos}{rand*5} 
\draw[dashed, gray, thick, <->] (-0.5,\xpos) -- (5.5,-\ypos); 

\pgfmathsetmacro{\xpos}{rand*5} 
\pgfmathsetmacro{\ypos}{rand*5} 
\draw[dashed, gray, thick, <->] (\xpos,-0.5) -- (5.5,\ypos); 

\pgfmathsetmacro{\xpos}{rand*5} 
\pgfmathsetmacro{\ypos}{rand*5} 
\draw[dashed, gray, thick, <->] (-2.5*\xpos,-0.5) -- (-0.5,\ypos); 

\pgfmathsetmacro{\xpos}{rand*5} 
\pgfmathsetmacro{\ypos}{rand*5} 
\draw[dashed, gray, thick, <->] (-0.5,-\xpos) -- (1.5*\ypos,5.5); 

\pgfmathsetmacro{\xpos}{rand*5} 
\pgfmathsetmacro{\ypos}{rand*5} 
\draw[dashed, gray, thick, <->] (5.5,\xpos) -- (2.0*\ypos,5.5); 

\end{tikzpicture}
}
\caption*{\tiny{$x^{(\ell+1)}_q = \text{ReLU}\left(\sum_p W_{qp} x^{(\ell)}_p - t_q\right)$}}
\end{subfigure}
\begin{subfigure}[b]{0.32\textwidth}
\centering
\resizebox{1.6in}{1.6in}{
\begin{tikzpicture}

\pgfmathsetseed{126} 
\def\rows{5}
\def\cols{5}

\draw[->, ultra thick] (-0.5,0) -- (\cols + 0.5,0) node[right] {$x_1$};
\draw[->, ultra thick] (0,-0.5) -- (0,\rows + 0.5) node[above] {$x_2$};

\pgfmathsetmacro{\xpos}{rand*5}
\pgfmathsetmacro{\ypos}{rand*5}
\draw[dashed, gray, thick, <->] (-0.5, \xpos) -- (5.5, \xpos);
\draw[dashed, gray, thick, <->] (\ypos, -0.5) -- (\ypos, 5.5);

\end{tikzpicture}
}
\caption*{\tiny{$x_q^{(\ell+1)} = \sum_p W_{qp} \text{ReLU}\left(x^{(\ell)}_p - t_p\right)$}}
\end{subfigure}
\begin{subfigure}[b]{0.32\textwidth}
\centering
\resizebox{1.6in}{1.6in}{
\begin{tikzpicture}

\pgfmathsetseed{123} 
\def\rows{5}
\def\cols{5}

\draw[->, ultra thick] (-0.5,0) -- (\cols + 0.5,0) node[right] {$x_1$};
\draw[->, ultra thick] (0,-0.5) -- (0,\rows + 0.5) node[above] {$x_2$};

\foreach \i in {0,1,...,\rows} {
    \draw[dashed, gray, thick, <->] (-0.5,\i) -- (\cols+0.5,\i); 
}
\foreach \j in {0,1,...,\cols} {
    \draw[dashed, gray, thick, <->] (\j,-0.5) -- (\j,\rows+0.5); 
}
\end{tikzpicture}
}
\caption*{\tiny{$x_q^{(\ell+1)} = \sum_p \sum_i W_{qpi} \text{ReLU}\left(x^{(\ell)}_p - t_i\right)$}}
\end{subfigure}
\caption{Example of the learned separating hyperplanes  for three different architecture choices: (left) a standard MLP layer, where the nonlinearity is applied after the affine transformation, (center) a reordered MLP layer, where the nonlinearity is applied before the linear transformation, and (right) a standard KAN layer, where the nonlinearity is applied before the linear transformation.\label{fig:spline-support}}
\end{figure*}


\section{Preconditioning via Change of Basis}
\label{sec:preconditioning}
The change of basis that occurs in Equation \eqref{eq:kan_relu} can be viewed as preconditioning the optimization problem encountered during training. 
Let $\mathcal{W}$ denote the set of all weights for the network with the ReLU basis, and let $\widetilde{\mathcal{W}}$ be the transformed weights for the spline basis. Denote by $\mathcal{A}$ the transformation from $\widetilde{\mathcal{W}}$ to $\mathcal{W}$.
In detail, for $\widetilde{\mathcal{W}} = \left\{ \widetilde{W}^{(\ell)} \right\}_{\ell = 0,\dots,L}$ define
\begin{equation}
\mathcal{A}\left(\widetilde{\mathcal{W}}\right) \coloneqq \mathcal{W} = \left\{W^{(\ell)} \right\}_{\ell = 0,\dots,L} 
= \left\{\widetilde{W}^{(\ell)} \times_3 A^{(\ell)} \right\}_{\ell = 0,\dots,L},
\end{equation}
where $A^{(\ell)}$ is the appropriate change of basis on layer $\ell$ (depending on $P,Q$, and spline order).
For an objective function $\mathcal{L}$, training implies the equivalent  minimization problems 
\begin{equation} \label{eq:min_COB}
\min_{\mathcal{W}} \mathcal{L}\left(\mathcal{W}\right) = \min_{\widetilde{\mathcal{W}} } \left( \mathcal{L}\circ \mathcal{A} \right) \left(\widetilde{\mathcal{W}}\right) .
\end{equation}
When we perform gradient descent iterations in $\mathcal{W}$ with a step size $\eta$, for the weights of each layer, we take steps of the form
$    W^{(\ell)}_{k+1} = W^{(\ell)}_k - \eta \nabla \mathcal{L}\left(W^{(\ell)}_k\right),$
while if we perform gradient descent iterations in $\widetilde{\mathcal{W}}$:
\begin{equation} \label{eq:tensor_precondition} \begin{split}
    \widetilde{W}^{(\ell)}_{k+1} &= \widetilde{W}^{(\ell)}_k - \eta \nabla (\mathcal{L} \circ \mathcal{A})\left(\widetilde{W}^{(\ell)}_k\right) 
    = \widetilde{W}^{(\ell)}_k - \eta  \nabla \mathcal{L} \left( \widetilde{W}^{(\ell)}_k \times_3 A \right) \times_3 A^T \\
    \Longleftrightarrow \qquad     W^{(\ell)}_{k+1} &= W^{(\ell)}_k - \eta  \nabla \mathcal{L} \left(W^{(\ell)}_k\right) \times_3 A^TA.
\end{split} \end{equation}
The implication is that performing gradient descent for the spline basis, is equivalent to a preconditioned gradient descent in the ReLU basis. 

Equivalently, we can write Equation \eqref{eq:tensor_precondition} in a vectorized fashion. To do so, vectorize in mode order (following \citep{kolda2009tensor}) the weight tensors $\mathcal{W}$, or $\widetilde{\mathcal{W}}$, and then concatenate the vector of weights from each layer; denote the resulting vector of weights $\mathbb{W}$, or $\widetilde{\mathbb{W}}$, respectively. Similarly, create a square block-diagonal matrix $\mathbb{A}$ of  $\sum_{\ell=0}^L P_\ell Q_\ell (n_\ell + r_\ell -1)$ rows and columns, whose diagonal consists of $\sum_\ell P_\ell Q_\ell $ blocks, each of whom are the corresponding linear transform $A$, i.e.
\begin{equation}\label{eq:bbW}
\mathbb{W} = \mathbb{A} \widetilde{\mathbb{W}}.
\end{equation}
With this notation, the final line in Equation \eqref{eq:tensor_precondition} becomes the preconditioned gradient descent update
\begin{equation} \label{eq:vector_precondition} \begin{split}
\mathbb{W}_{k+1} &= \mathbb{W}_k - \eta  \mathbb{A} \mathbb{A}^T \nabla \mathcal{L}\left( \mathbb{W}_k \right).
\end{split} \end{equation}


We illustrate the effects of this preconditioning for a least-squares regression problem posed in the spline vs. the $\text{ReLU}^{r-1}$ basis. With data $\{(X_d, Y_d)\}_{d=1:D}$, where $X_d$ are randomly sampled from a domain $\Omega$ with uniform measure $\mu$, the least squares problem is
\begin{equation} \label{eq:lstsq}
\min_{W} \frac{1}{2D} \left \lVert W \Phi_*(X) - Y \right \rVert_2^2
\end{equation}
where $* \in \{S,R\}$ so that $\Phi_{R_i}(x_j) = b_i(x_j)$ for the spline basis and $\Phi_{S_i}(x_j) = \relu( x_j - t_i )^{r-1}$ for the $\text{ReLU}^{r-1}$ basis. 
Gradient descent converges with a rate of 
$O(\kappa(H_*))$, where $H_*$ is the Hessian of the objective function in Equation \eqref{eq:lstsq} and $\kappa$ denotes its condition number:
\begin{equation} \begin{split}
    H_R &= \frac{1}{D} \Phi_R(X) \Phi_R(X)^T, \\
    H_S &= \frac{1}{D} \Phi_S(X) \Phi_S(X)^T = A H_R A^T.
\end{split} \end{equation}
As the amount of randomly sampled data $D \rightarrow \infty$,  the Hessians converge to the Gram matrices
\begin{equation}
    G_* = \int_\Omega \Phi_*(x) \Phi_*(x)^T d \mu(x).
\end{equation}

\begin{proposition} Suppose the ratio between the maximal and minimal distance between spline knots is bounded above and below, away from $0$.
The condition number of the Gram mass matrices 
are bounded with respect to the number of functions in our basis for the spline case $* = S$, while it is unbounded for the $\text{ReLU}^{r-1}$ case $* = R$.
\end{proposition}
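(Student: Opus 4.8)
The plan is to bound the extreme eigenvalues of each Gram matrix separately, using that every eigenvalue is realized as a Rayleigh quotient $a^\top G_* a / \|a\|_2^2 = \big\|\sum_i a_i \Phi_{*i}\big\|_{L^2(\mu)}^2 / \|a\|_2^2$, so the entire argument reduces to comparing the $L^2$ norm of a linear combination of basis functions against the Euclidean norm of its coefficient vector. The quasi-uniformity hypothesis, namely that $\max_i(t_{i+1}-t_i)/\min_i(t_{i+1}-t_i)$ stays bounded, is exactly what lets me introduce a single mesh scale $h \asymp (b-a)/n$ to which all knot spacings are comparable; this makes powers of $h$ and powers of $n$ interchangeable and lets me keep every constant below independent of the basis size $n$.

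For the spline case $*=S$ I would first invoke locality: since $b_i^{[r-1]}$ is supported on the $r$ intervals $[t_i,t_{i+r}]$, the matrix $G_S$ is banded with bandwidth $r$ and each entry is $O(h)$. Boundedness of $\kappa(G_S)$ then follows from the classical $L^2$-stability (Riesz-basis property) of B-splines on quasi-uniform meshes \citep{deboor1978practical, chui1988multivariate}: there exist constants $0 < c_r \le C_r$ depending only on the order $r$ and the quasi-uniformity constant with $c_r \sum_i a_i^2 h \le \big\|\sum_i a_i b_i^{[r-1]}\big\|_{L^2}^2 \le C_r \sum_i a_i^2 h$. Because quasi-uniformity forces every diagonal entry $\|b_i^{[r-1]}\|_{L^2}^2$ to be comparable to the common scale $h$, this two-sided bound says precisely that $G_S$ is spectrally equivalent to $h\,I$, whence $\kappa(G_S) \le C_r/c_r$ uniformly in $n$.

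For the ReLU case $*=R$ I would exhibit a single coefficient vector that drives the smallest eigenvalue to zero. The key fact is that each compactly supported B-spline is an $r$-th divided difference, in the knot variable, of the truncated power, so $b_i^{[r-1]}$ equals a combination of the $r+1$ functions $\relu(\cdot - t_j)^{r-1}$, $j=i,\dots,i+r$, whose coefficients are the divided-difference weights (this also follows by unrolling the recursion \eqref{eq:br-def}). Under quasi-uniformity each such weight scales like $h^{-(r-1)}$, so collecting them into a vector $a$ gives $\|a\|_2 \asymp h^{-(r-1)}$ while $\sum_i a_i \relu(\cdot - t_i)^{r-1} = b_i^{[r-1]}$ satisfies $\|b_i^{[r-1]}\|_{L^2}^2 \asymp h$. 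The Rayleigh quotient then yields $\lambda_{\min}(G_R) \lesssim h \cdot h^{2(r-1)} = h^{2r-1}$. Meanwhile $\lambda_{\max}(G_R) \ge \max_i (G_R)_{ii} = \max_i \|\relu(\cdot - t_i)^{r-1}\|_{L^2}^2$, which is bounded below by the $L^2$ mass of the truncated power seated at the leftmost knot, a positive constant independent of $n$. Hence $\kappa(G_R) \gtrsim h^{-(2r-1)} \asymp n^{2r-1} \to \infty$.

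The main obstacle is the spline case: the upper eigenvalue bound is elementary (a banded matrix with $O(h)$ entries), but the matching lower eigenvalue bound, encoded in the stability constant $c_r$, is where quasi-uniformity is genuinely needed and must be invoked carefully so that the constant does not degrade as $n\to\infty$; rather than re-derive it I would lean on the standard B-spline stability theorem. On the ReLU side the only quantitative point requiring care is verifying the $h^{-(r-1)}$ scaling of the divided-difference coefficients for general order $r$, which comes from the product-of-knot-gaps denominators under quasi-uniformity; the $r=2$ computation, where $\tfrac{1}{h}\big(\relu(\cdot - t_{i-1}) - 2\relu(\cdot - t_i) + \relu(\cdot - t_{i+1})\big)$ reproduces a unit hat function, already exhibits the mechanism and gives the representative rate $\kappa(G_R)\gtrsim h^{-3}$.
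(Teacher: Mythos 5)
Your proposal is correct, but it reaches the conclusion by a genuinely different route than the paper. The paper's proof is largely matrix-analytic and citation-based: for $r=2$ it points to \citet{hong2022activation}, \citet{unser1999splines}, and \citet{deboor1978practical} for the spline bound and to \citet{hong2022activation} and \citet{goel2017reliably} for the ReLU unboundedness, and for $r>2$ it sketches an adaptation of the argument of \citet{hong2022activation} using the Gershgorin circle theorem to trap the eigenvalues of $G_S$ and Ostrowski's theorem to transfer spectral information through the congruence $G_S = A G_R A^T$. You instead work directly with Rayleigh quotients: for $*=S$ you invoke the classical $L^2$ Riesz-basis stability of B-splines on quasi-uniform meshes, which gives the spectral equivalence $G_S \asymp h I$ and hence $\kappa(G_S)\le C_r/c_r$ in one stroke; for $*=R$ you exhibit an explicit witness vector, the divided-difference coefficients expressing $b_i^{[r-1]}$ in terms of $r+1$ truncated powers, whose $h^{-(r-1)}$ scaling forces $\lambda_{\min}(G_R)\lesssim h^{2r-1}$ while a diagonal entry keeps $\lambda_{\max}(G_R)$ bounded below, yielding the quantitative rate $\kappa(G_R)\gtrsim n^{2r-1}$. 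What each approach buys: the paper's Gershgorin/Ostrowski route stays entirely at the level of the matrices $A$ and $G_R$ and meshes naturally with the change-of-basis machinery developed in its appendix, whereas your argument is more self-contained on the ReLU side (it proves, rather than cites, the unboundedness, and with an explicit growth rate) and on the spline side defers to a standard approximation-theoretic stability theorem rather than re-deriving eigenvalue bounds. Two small points of care: the constant-order lower bound on $\lambda_{\max}(G_R)$ should use a knot well inside (or to the left of) the domain so the truncated power has $O(1)$ mass on $\Omega$, and the witness B-spline for $\lambda_{\min}(G_R)$ is cleanest if chosen in the interior so its full support lies in $[a,b]$; both are trivially arranged for $n\ge 2$ and do not affect the argument.
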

\begin{proof}
For $r=2$, i.e. continuous piecewise linear functions, the boundedness of the condition number $\kappa(G_S)$ is proven for uniform splines as in \citep{hong2022activation}, and for non-uniform non-degenerate splines in \citep{unser1999splines, deboor1978practical}, while the unboundedness of $\kappa(G_R)$ is proven in the appendices of \citep{hong2022activation} for uniform biases within the ReLU activations, and for arbitrary biases in several works, including \citep{goel2017reliably}.
For the case $r > 2$, the result that $\kappa(G_S)$ is bounded follows from a straightforward modification of the argument presented in \citep{hong2022activation}, replacing the diagonal matrices for hat functions with those that define the B-spline basis. For nonuniform splines, the rest of the argument relies on a comparable application of the Gershgorin circle theorem to bound the eigenvalues both above and below, ensuring the condition number is bounded independent of the number of splines, and then using Ostrowski's Theorem \citep{ostrowski1959quantitative} to complete the proof via the same argument as presented in \citet{hong2022activation}.
\end{proof}

Beyond a least-squares problem, in the presence of depth, the transformation from the spline basis to/from the $\text{ReLU}^{r-1}$ basis is done independently for each layer $W^{(\ell)}$. This is the result in Equation \eqref{eq:min_COB}, where the resulting preconditioner is applied block-wise independently for each layer, thereby improving the conditioning and convergence of training for each weight tensor, one layer at a time.


By preconditioning our optimization problem, we expect a change in spectral bias as well. Towards this end, the work in \citep{hong2022activation} provides a rigorous result on shallow MLPs with fixed biases and no linear transformation of weights for the initial layer. The KAN architecture as originally conceived in \citep{liu2024kan} matches the analysis in \citep{hong2022activation}, where the shifts in biases are incorporated through multiple channels instead of as a fixed MLP bias, and in this lens, we comment on the effects of preconditioning on the spectrum of the NTK \citep{rahaman2019spectral}.

When we use the spline basis vs. the ReLU basis, we first notice that the dominant eigenvalue is the same order for both bases:
\begin{theorem} \label{thm:ntk_cob} For symmetric positive semidefinite $M$ let $\rho(M)\geq 0$ denote the spectral radius. Define the MLP and (globally) order-$r$ spline NTK matrices, respectively as
\begin{equation} 
    \text{NTK}_R = J J^T, \hspace{4ex}
    \text{NTK}_S^{[r]} = J \mathbb{A} \mathbb{A}^T J^T,
\end{equation}
where $\mathbb{A}$ \eqref{eq:bbW} depends on $r\in\mathbb{Z}^+$. Then for uniform knot spacing,  
\begin{equation}
    \rho(\text{NTK}_S^{[r]}) \leq 4 \rho(\text{NTK}_R).
\end{equation}
\end{theorem}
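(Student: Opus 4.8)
The plan is to reduce the spectral-radius inequality to a norm bound on the block change-of-basis matrix $\mathbb{A}$, and then to extract that norm from the explicit uniform-knot form of the single-layer matrix $A$.

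First I would exploit the shared factor $J$ through a positive-semidefinite sandwiching argument. Since $\mathbb{A}\mathbb{A}^T$ is symmetric PSD, we have $\mathbb{A}\mathbb{A}^T \preceq \sigma_{\max}(\mathbb{A})^2 I$ in the Loewner order, where $\sigma_{\max}$ denotes the largest singular value. Conjugation by $J$ preserves this order, so
\[
\text{NTK}_S^{[r]} = J\mathbb{A}\mathbb{A}^T J^T \preceq \sigma_{\max}(\mathbb{A})^2\, JJ^T = \sigma_{\max}(\mathbb{A})^2\,\text{NTK}_R .
\]
Because both sides are symmetric PSD, the Loewner ordering implies the corresponding ordering of largest eigenvalues (by Weyl's inequality), and for PSD matrices $\rho = \lambda_{\max}$, giving $\rho(\text{NTK}_S^{[r]}) \le \sigma_{\max}(\mathbb{A})^2\,\rho(\text{NTK}_R)$. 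Concretely, for any unit vector $v$ one has $v^T J\mathbb{A}\mathbb{A}^T J^T v = \|\mathbb{A}^T J^T v\|_2^2 \le \sigma_{\max}(\mathbb{A})^2 \|J^T v\|_2^2$. This step is routine and reduces the whole theorem to bounding $\sigma_{\max}(\mathbb{A})$.

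Next I would remove the block structure. By the construction in \eqref{eq:bbW}, $\mathbb{A}$ is block diagonal with blocks equal to the per-layer change-of-basis matrices $A^{(\ell)}$, and the singular values of a block-diagonal matrix are the union of the singular values of its blocks. Hence $\sigma_{\max}(\mathbb{A}) = \max_\ell \sigma_{\max}(A^{(\ell)})$, and for a globally order-$r$ uniform grid every block is the same matrix $A$. It therefore suffices to prove $\sigma_{\max}(A) \le 2$, i.e.\ $\sigma_{\max}(A)^2 \le 4$, which is exactly the constant appearing in the statement.

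The remaining and principal task is to bound $\sigma_{\max}(A)$ for uniform knots, using the explicit entries derived in Appendix \ref{app:ntk_proof}. For equispaced knots $A$ is banded and Toeplitz, its rows encoding the representation of each (normalized) B-spline as a finite-difference stencil of truncated powers $\relu(\cdot - t_i)^{r-1}$. Because the matrix is Toeplitz, its singular values are controlled by the modulus of its generating symbol $a(\theta) = \sum_k A_{0k}\,e^{ik\theta}$, and the goal becomes showing $\max_\theta |a(\theta)| \le 2$; alternatively one can use $\sigma_{\max}(A) \le \sqrt{\|A\|_1 \|A\|_\infty}$ and estimate the maximal absolute row and column sums from the stencil coefficients. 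The hard part will be making this bound uniform in the spline order $r$ and independent of the number of knots: the raw order-$r$ stencil carries binomial coefficients summing to $2^r$, so a naive triangle-inequality estimate yields an exponential, $r$-dependent constant. Obtaining the clean constant $2$ requires exploiting the sign cancellation in the symbol, which factors through powers of $(1-e^{i\theta})$ that are exactly offset by the $h^{r-1}$ normalization built into $A$, or equivalently recursing on the de Boor relation \eqref{eq:br-def} so that each increment of the order contributes a factor the normalization compensates. I would verify the base case $r=2$ (where the stencil is proportional to $\{1,-2,1\}$ and the symbol bound is immediate) and then propagate the normalization through the recursion to close the general-$r$ case; the uniform-knot hypothesis is precisely what renders $A$ Toeplitz and makes this symbol computation tractable.
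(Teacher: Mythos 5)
Your overall strategy coincides with the paper's: reduce the spectral-radius comparison to a bound on $\sigma_{\max}(\mathbb{A})$, pass to a single block using the block-diagonal structure of \eqref{eq:bbW}, and control the uniform-knot block through its Toeplitz symbol. Your first step is a slightly cleaner packaging (Loewner sandwiching rather than the paper's submultiplicativity $\|J\mathbb{A}\|\le\|J\|\,\|\mathbb{A}\|$), and it correctly identifies that the constant in the theorem must be $\sigma_{\max}(\mathbb{A})^2$.

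The gap is in the final step, and it is not closable as you have set it up. The normalized symbol is $F_r(\theta)=(1-e^{\mathrm{i}\theta})^r/(r-1)!$, whose maximum modulus is $2^r/(r-1)!$; this equals $4$ for $r=2$ and $r=3$, and the largest singular value of a banded Toeplitz matrix converges to the supremum of its symbol as the dimension grows, so $\sigma_{\max}(A)\to 4$ in those cases. Your target $\sigma_{\max}(A)\le 2$ therefore fails already in your base case: the stencil $\{1,-2,1\}$ has symbol modulus $|1-e^{\mathrm{i}\theta}|^2$ with maximum $4$, and the alternative estimate $\sqrt{\|A\|_1\|A\|_\infty}$ also gives $4$. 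Fed through your (correct) reduction this yields the constant $16$, not $4$. (For what it is worth, the paper's own proof establishes exactly $\|\widetilde A^{[r]}\|\le 2^r/(r-1)!\le 4$ and then asserts the factor-$4$ conclusion without squaring, so your reduction in fact exposes a discrepancy there as well.) Separately, you have the direction of the normalization backwards: the factor built into $A^{[r]}$ for uniform knots is $h^{1-r}=n^{r-1}$, which makes $\|A^{[r]}\|$ grow without bound in $n$ for $r\ge2$ rather than offsetting the binomial growth. The paper removes it by re-expressing the ReLU side in the rescaled basis $\relu\left((x-t_i)/h\right)^{r-1}$, so that the relevant matrix becomes $\widetilde A^{[r]}=h^{r-1}A^{[r]}$; without that auxiliary-basis step (or some equivalent renormalization absorbed into $J$) no $n$-independent bound on $\sigma_{\max}(\mathbb{A})$ exists at all.
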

\begin{proof}
See Appendix \ref{app:ntk_proof}.
\end{proof}

We secondly notice that the preconditioning evident in Equation \eqref{eq:tensor_precondition} is effectively a structured linear transform that couples weights associated with adjacent knots. This coupling allows more complex paths through the gradient landscape and provides locality -- and therefore dampening -- to the rest of the spectral modes during training, reversing the behavior of spectral bias. Rigorously proving this statement is difficult due to the nullspace of the Hessian and NTK that occurs during training; see Section \ref{app:nullspace} in the Appendix for a concise description of why such a nullspace arises. However, we see this behavior in our experiments, manifesting in an accelerated convergence rate, as seen during our examples presented in Section \ref{sec:experiments}.

For an analysis of the computational cost of the new basis, please see Appendix \ref{app:computational-cost}.

\section{Geometric Refinement}
\label{sec:multigrid}

Using multiresolution features of splines to build KANs at differing resolutions was introduced in \citep{liu2024kan}. However, the refinement done in \citep{liu2024kan} does not perform geometric refinement; instead, they interpolate between grids of arbitrary sizes. To do so, they solve for each hidden node a least-squares problem of size $N_\text{data} \times (n+r-1)$, which is unsurprisingly costly. This idea was dismissed by  the authors as being too slow for practical applications.
In contrast, it is substantially more efficient to refine geometrically and then use restriction and prolongation operators from multigrid literature \citep{hollig2003finite, hackbusch2013multi} to perform the grid transfer, thus enabling multiresolution training to become much more feasible and cost-effective.

\subsection{Multilevel Training of KANs}
We first make the observation that for two sets of knots $T,\,T'$ where $T \subset T'$, we have $S_r(T) \subset S_r(T')$. Thus there is an easily constructed injection operator from one grid to another (in the case of nested grids), denoted $\mathcal{I}$.

The splines defined by an arbitrary set of knots $T$ on arbitrary coarse grid with basis $\{b_{i,T}^{r-1}\}_{i=1-r}^{n-1}$
has a corresponding basis with knots $T'$ on a finer grid given by $\{ b_{i,T'}^{r-1} \}_{i=1-r}^{m-1}$ with $m>n$.
For a KAN layer with knots $T$ and weights $\widetilde{W}_{qpi,T}$,
we define a fine KAN layer with knots $T'$ and weights $\widetilde{W}_{qpi,T'}$ by
\begin{equation}
x_q^{(\ell+1)} = \sum_{p=1}^{P} \sum_{i=1-r}^{m-1} \widetilde{W}_{qpi,T'}^{(\ell)} \, b_{i,T'}^{r-1}(x_p^{(\ell)}),
\label{eq:kan-refinement}
\end{equation}
where for $p=1,\dots,P$, $q=1,\dots,Q$
\begin{equation}
    \widetilde{W}_{qpi,T'}^{(\ell)} b_{i,T'}^{r-1} =
    \sum_{i=1-r}^{n-1} \mathcal{I}_{ij} \widetilde{W}_{qpi,T}^{(\ell)} b_{i,T}^{r-1}
\label{eq:kan-interpolation}
\end{equation}
is an exact equality with interpolation weights $\mathcal{I}_{ij}$.

The refinement process described in
Equations~\eqref{eq:kan-refinement},\eqref{eq:kan-interpolation}
produces a new KAN layer that is equal to the previous
with more trainable parameters.
This is the most important feature of multilevel KANs,
as it means the refinement process does not modify
the training progress made on coarse models. 

\subsection{The Class of Multilevel KANs}
However, the multilevel method for KANs is not unique to our B-spline approach.
This approach is valid for broad classes of KANs where activation functions are discretized.
Any discretization of activation functions with a similar nesting of spaces
upon grid (or polynomial) refinement satisfies this same result, which we will state below.

Let $\mathcal{B} = \{\phi_i:[a,b]\to\mathbb{R}\}_{i=1}^{n_B}$ be a basis of size $n_B$
for the activation function space (e.g. B-splines, Chebyshev, wavelets)
on a domain $[a,b]$ with $a,b\in\mathbb{R}$, $a<b$,
and let $\mathcal{B}' = \{\phi_i':[a,b]\to\mathbb{R}\}_{i=1}^{n_{B'}}$
be a basis of size $n_{B'}>n_B$ such that $\mathcal{B}' \supset \mathcal{B}$.
Then there is an operator $\mathcal{I_B^{B'}}:\mathcal{B}\to \mathcal{B}'$ which interpolates
basis $\mathcal{B}$ to $\mathcal{B}'$ such that for $\sigma\in \mathcal{B}$
we have $\mathcal{I}_B^{B'}\sigma\in \mathcal{B}'$
and $\sigma(x) \equiv \mathcal{I}_B^{B'}\sigma(x)$.

\begin{lemma} \label{lemma:multilevelkans}
Any KAN with activations defined over a basis which is nested under a refinement procedure
may be refined so that the activation functions after refinement are identical to their
pre-refinement counterparts by interpolating the network parameters according to the above procedure.
\end{lemma}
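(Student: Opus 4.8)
The plan is to reduce the statement to a single activation function, then propagate through the whole network by linearity in the weights and composition across layers. First I would fix a layer $\ell$ and an input/output index pair $(p,q)$ and write the coarse activation in the coarse basis, $\phi_{pq}^{(\ell)} = \sum_{i=1}^{n_B} W_{qpi}^{(\ell)}\,\phi_i$, so that $\phi_{pq}^{(\ell)} \in \mathrm{span}(\mathcal{B})$. The nesting hypothesis $\mathcal{B}' \supset \mathcal{B}$ gives the inclusion of spaces $\mathrm{span}(\mathcal{B}) \subseteq \mathrm{span}(\mathcal{B}')$, so each coarse activation already lies in the fine activation space; the content of the lemma is that the representing coefficients can be produced explicitly by the interpolation operator rather than by solving a fit.

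Next I would extend $\mathcal{I}_B^{B'}$ from the basis to all of $\mathrm{span}(\mathcal{B})$ by linearity (this extension is well defined precisely because $\mathcal{B}$ is a basis), and record its action on basis elements as $\mathcal{I}_B^{B'}\phi_i = \sum_{j=1}^{n_{B'}} \mathcal{I}_{ij}\,\phi_j'$, where the $\mathcal{I}_{ij}$ are the interpolation weights appearing in Equation \eqref{eq:kan-interpolation}. Applying the operator to $\phi_{pq}^{(\ell)}$ and collecting coefficients then \emph{defines} the refined weights $\widehat{W}_{qpj}^{(\ell)} = \sum_{i=1}^{n_B} W_{qpi}^{(\ell)}\,\mathcal{I}_{ij}$, i.e. a mode product of the weight tensor against $\mathcal{I}$ along the channel index, in direct analogy with the change-of-basis map $\times_3 A$ of Section \ref{sec:formulation}.

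The crucial step is the exactness guaranteed by the defining property $\sigma(x) \equiv \mathcal{I}_B^{B'}\sigma(x)$: it upgrades the set-theoretic inclusion of spaces to a pointwise identity, giving $\phi_{pq}^{(\ell)}(x) = \sum_{j} \widehat{W}_{qpj}^{(\ell)}\,\phi_j'(x)$ for every $x$. Since this holds for each pair $(p,q)$, the refined layer computes exactly the same map as the coarse layer. Finally I would compose across layers: because each layer's input--output map is unchanged, an induction on $\ell$ shows the full refined network is pointwise identical to the original, which is the claimed invariance of the activation functions (and hence the network) under refinement.

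I expect the only genuine subtlety to be bookkeeping rather than mathematics, namely maintaining the distinction between nesting of the \emph{spaces} $\mathrm{span}(\mathcal{B}) \subseteq \mathrm{span}(\mathcal{B}')$ and nesting of the basis \emph{functions} themselves (for B-splines a coarse basis function is not literally a fine basis function, but it is exactly representable in the fine basis), together with verifying that the linear extension of $\mathcal{I}_B^{B'}$ coincides with the coefficientwise formula \eqref{eq:kan-interpolation}. Everything else is linear algebra carried along the channel dimension, and no approximation error is incurred at any stage.
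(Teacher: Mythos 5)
Your proposal is correct and follows essentially the same route as the paper, which gives no separate proof of this lemma but instead relies on exactly the construction you formalize: the exact interpolation identity of Equation \eqref{eq:kan-interpolation} applied coefficientwise along the channel dimension, followed by the observation that a layerwise pointwise identity propagates through the composition. Your added care about distinguishing nesting of the spans from nesting of the basis functions themselves (the paper writes $\mathcal{B}'\supset\mathcal{B}$ where it really means containment of the spanned spaces, as with B-splines) is a fair and worthwhile clarification, but it does not change the argument.
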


\begin{remark}
This network refinement is tied to the discretization of the activation functions; essentially this performs refinement with respect to the channel dimension of the weight tensors, instead of refining the number of hidden nodes at each layer.
While there are efforts to refine networks by increasing the number of layers or hidden nodes,
the method in Lemma \ref{lemma:multilevelkans} does not change the number of nodes in the network.
\end{remark}



\section{Free-Knot Spline KANs}
\label{sec:free-knot}


The grid dependence of KANs can be relaxed through the use of free-knot splines, instead of relying on a fixed spline grid. We present here a scheme for building free-knot splines, adding an additional trainable parameter to determine the spline knots' locations for each KAN layer. These trainable spline knots - if parameterized directly as a trainable variable - must be restricted (or sorted) to avoid the inversion of spline knots. Instead, following the strategy in \citep{actor2024data}, we parameterize the relative distance \emph{between} spline knots, and then remap them to properly scale to the entire domain of the layer. 

With the domain $[a,b]$ of our splines fixed, we seek to parameterize the set of knots $T$. For the knots $\{t_0,\dots,t_n\}$, we define a trainable parameter $s \in \mathbb{R}^{n}$, and then simply define
\begin{equation} 
t_i = a + (b-a) \sum_{j=1}^i \texttt{softmax}(s)_j.
\label{eq:trainable-knots}
\end{equation}
With this parameterization, it is clear that $t_0 = a$ (since no summands are added), $t_{n} = b$ (since the sum of a softmax is 1), and that for all $i$, $t_i < t_j$ (since softmax is positive), regardless of how the parameter $s$ is updated during training. We initialize $s$ to zero, so that the initial knots are uniformly spaced, and then training can adapt the locations of $T$ to best match the data. For the knots less than $t_0$, we repeat this construction, fixing $t_{1-r} = t_0 - (b-a)$; for the knots greater than $t_n$ we repeat again, fixing $t_{n+r-1} = t_n + (b-a)$.

As the equivalence results between KANs and multichannel MLPs presented earlier do not assume that the splines are uniformly spaced, we can then replace the evaluation of the bases $\Phi_R$ or $\Phi_S$ with these trainable knots, with no other changes to an implementation of a KAN layer. The geometric refinement results do not assume uniform spline knot spacing, and thus can be used in conjunction with free-knot splines without additional complications.

\section{Experiments}
\label{sec:experiments}
We demonstrate results for two regression problems and one scientific machine learning problem, a physics-informed neural network (PINN). Each is presented below; details regarding problem setup, data generation, and hyperparameters can be found in Appendix \ref{app:hyperparameters}.

\begin{table}[tbp!]
\caption{
Accuracy for nonsmooth regression problem, comparing the effects of (1) preconditioning the basis of a multichannel MLP from a ReLU to a spline basis; (2) geometric of the spline knots for the KAN basis, and (3) employing trainable free-knot splines for the KAN basis. MLPs are shown for comparison; standard deviations are computed over random initializations ($N=5$).}
\label{table:results-regression}
\vskip 0.15in
\begin{center}
\begin{small}
\begin{sc}
\begin{tabular}{lcllccc}
\toprule
\multirow[b]{2}{*}{Type} & \multirow[b]{2}{*}{Layers} & \multirow[b]{2}{*}{Basis} & \multirow[b]{2}{*}{Fidelity} & Free & \# & MSE \\
 &  &  &   &   Knots? &  Param. &  mean (stdev)  \\
\midrule
KAN & $[2,5,1]$ & ReLU & coarse     & &  55 & $1.10 \times 10^{-2} \, (8.32 \times 10^{-3})$\\
KAN & $[2,5,1]$ & ReLU & fine       & & 230 & $1.35 \times 10^{-0} \, (1.50 \times 10^{-1})$\\
KAN & $[2,5,1]$ & ReLU & multilevel & & 230 & $1.06 \times 10^{-2} \, (8.03 \times 10^{-3})$\\
\midrule
KAN & $[2,5,1]$ & Spline & coarse     & &  55 & $1.65 \times 10^{-3} \, (4.18 \times 10^{-5})$\\
KAN & $[2,5,1]$ & Spline & fine       & & 230 & $2.54 \times 10^{-3} \, (5.64 \times 10^{-3})$\\
KAN & $[2,5,1]$ & Spline & multilevel & & 230 & $\mathbf{ 3.67 \times 10^{-5}} \, (7.19 \times 10^{-5})  $\\
\midrule
KAN & $[2,5,1]$ & ReLU   & coarse & \checkmark & 110 & $1.75 \times 10^{-3} \, (1.23 \times 10^{-3})$\\ 
KAN & $[2,5,1]$ & ReLU   & fine   & \checkmark & 460 & $9.71 \times 10^{-1} \, (1.74 \times 10^{-1})$\\
\midrule
KAN & $[2,5,1]$ & Spline & coarse & \checkmark & 110 & $1.84 \times 10^{-4} \, (2.18 \times 10^{-4})$\\
KAN & $[2,5,1]$ & Spline & fine   & \checkmark & 460 & $\mathbf{ 2.52 \times 10^{-5}} \, (3.18 \times 10^{-5}) $\\
\midrule
MLP & $[2,5,1]$     & ReLU & & &  20 &   $2.94\times 10^{-2} \, (1.60 \times 10^{-2})$\\
MLP & $[2,30,1]$    & ReLU & & & 120 &   $1.02 \times 10^{-3} \, (9.42 \times 10^{-4})$\\
MLP & $[2,20,20,1]$ & ReLU & & & 500 &   $3.33 \times 10^{-4} \, (2.92 \times 10^{-4})$ \\
\bottomrule
\end{tabular}
\end{sc}
\end{small}
\end{center}
\vskip -0.1in
\end{table}

\begin{table}[tb]
\caption{Accuracy for XOR regression problem, comparing the effects of (1) the choice of spline vs. ReLU basis; (2) geometric refinement of the spline knots for the KAN basis, and (3) use of free-knot splines for the KAN basis. Standard deviations computed with random initializations ($N=5$).}
\label{table:results-regression-xor-deep}
\vskip 0.15in
\begin{center}
\begin{small}
\begin{sc}
\begin{tabular}{lcllccc}
\toprule
\multirow[b]{2}{*}{Type} & \multirow[b]{2}{*}{Layers} & \multirow[b]{2}{*}{Basis} & \multirow[b]{2}{*}{Fidelity} & Free & \# & MSE \\
 &  &  &   &   Knots? &  Param. &  mean (stdev)  \\
\midrule
KAN & $[2,5,5,1]$ & ReLU &  coarse &  & 250 &               $2.13\times 10^{-3} (1.09\times 10^{-3})$ \\
KAN & $[2,5,5,1]$ & ReLU &  fine &  & 1300 &               $1.46\times 10^{-0} (3.16\times 10^{-1})$ \\
KAN & $[2,5,5,1]$ & ReLU &  multilevel &  & 1300 &          $3.18\times 10^{-3} (1.12\times 10^{-3})$ \\
\midrule
KAN & $[2,5,5,1]$ & Spline  & coarse &  & 250 &             $1.37\times 10^{-4} (5.31\times 10^{-5})$ \\
KAN & $[2,5,5,1]$ & Spline  & fine &  & 1300 &             $2.78\times 10^{-2} (3.84\times 10^{-2})$ \\
KAN & $[2,5,5,1]$ & Spline  & multilevel &  & 1300 &        $\mathbf{2.79\times 10^{-6}} (1.24\times 10^{-6})$ \\
\midrule
KAN & $[2,5,5,1]$ & ReLU &  coarse &  \checkmark & 360 &    $1.98\times 10^{-2} (3.73\times 10^{-2})$ \\
KAN & $[2,5,5,1]$ & ReLU &  fine &  \checkmark & 1760 &    $1.34\times 10^{-0} (2.44\times 10^{-1})$ \\
\midrule
KAN & $[2,5,5,1]$ & Spline  & coarse &  \checkmark & 360 &  $\mathbf{4.62\times 10^{-6}} (2.11\times 10^{-6})$ \\
KAN & $[2,5,5,1]$ & Spline  &fine &  \checkmark & 1760 &  $4.96\times 10^{-2} (4.05\times 10^{-2})$ \\
\midrule
MLP & $[2,5,5,1]$ & ReLU &  &  & 50 &                         $3.54\times 10^{-2} (2.86\times 10^{-2})$ \\
MLP & $[2,40,40,1]$ & ReLU &  &  & 1800   &                   $3.49\times 10^{-5}(1.69\times 10^{-5})$ \\
\bottomrule
\end{tabular}
\end{sc}
\end{small}
\end{center}
\vskip -0.1in
\end{table}

We experiment with different architectural choices for two regression problems. The first is a $[2,5,1]$ architecture nonsmooth problem, to highlight the necessity of mesh adaptation, regressing a 0.175 radian counterclockwise coordinate rotation of the function
\begin{equation}
f(x,y) = \cos(4\pi x) + \sin(\pi y) + \sin(2\pi y) + |\sin(3\pi y^2)|.
\label{eq:nonsmooth-regression}
\end{equation}
The second uses a $[2,5,5,1]$ architecture on a smoothed XOR problem, regressing the function
\begin{equation}
f(x,y) = \tanh(20x-10) \, \tanh(20x - 40y + 10).
\label{eq:xor-regression}
\end{equation}
This example notable in that support vector machines are unable to capture the XOR function, and as our networks are both small and shallow for this task, this example gains additional relevance.

We train using L-BFGS on a series of models, each with equivalent amounts of work (FLOPs) during training.
We denote the epochs counts in a list such as $[32,16,8,4]$,
meaning 32 epochs are performed on the initial model,
and then 16 epochs are performed after transferring to a refined model
where the original grid was evenly subdivided once,
and so on.
We refer to the \textit{coarse model} as $[128,0,0,0]$, the \textit{fine model} as $[0,0,0,16]$, and the \textit{multilevel model} as $[32,16,8,4]$. 

\begin{figure}[b]
    \centering
    \includegraphics[trim={6.6cm 1.2cm 6.6cm 1.4cm},clip,width=0.8\textwidth]{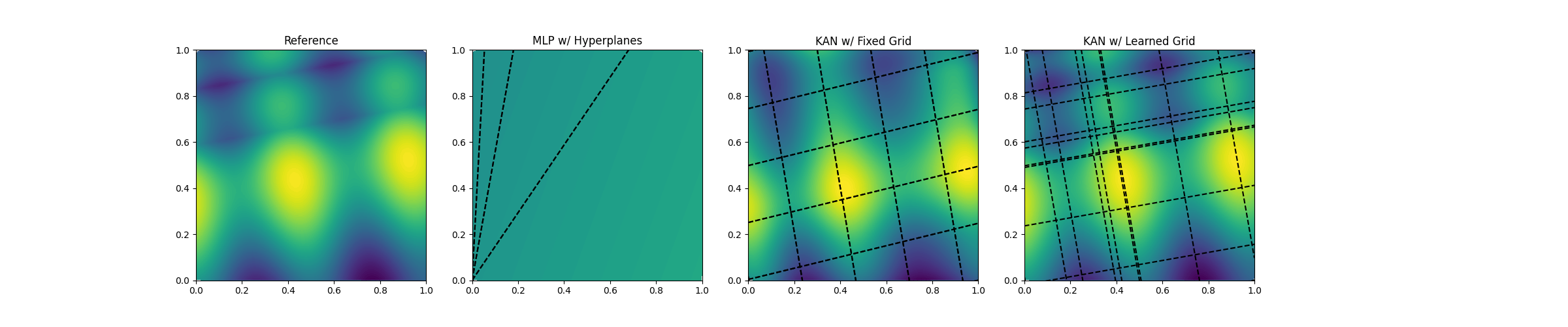}
    \hspace{-1.4cm}
    \includegraphics[trim={11.6cm 1.2cm 0cm 1.4cm},clip,width=0.06\textwidth]{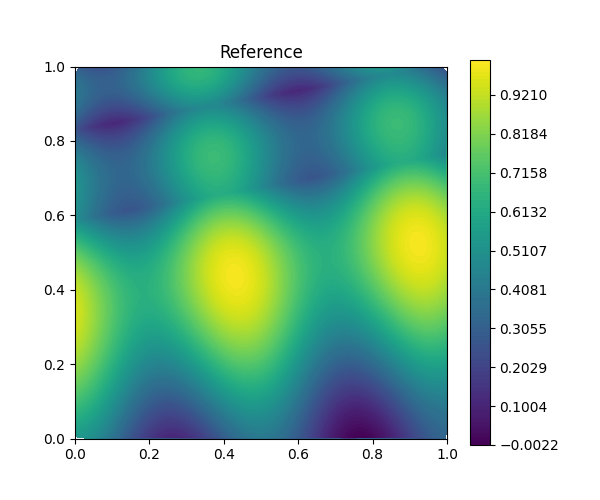} \\
    \includegraphics[trim={6.6cm 1.2cm 6.6cm 1.4cm},clip,width=0.8\textwidth]{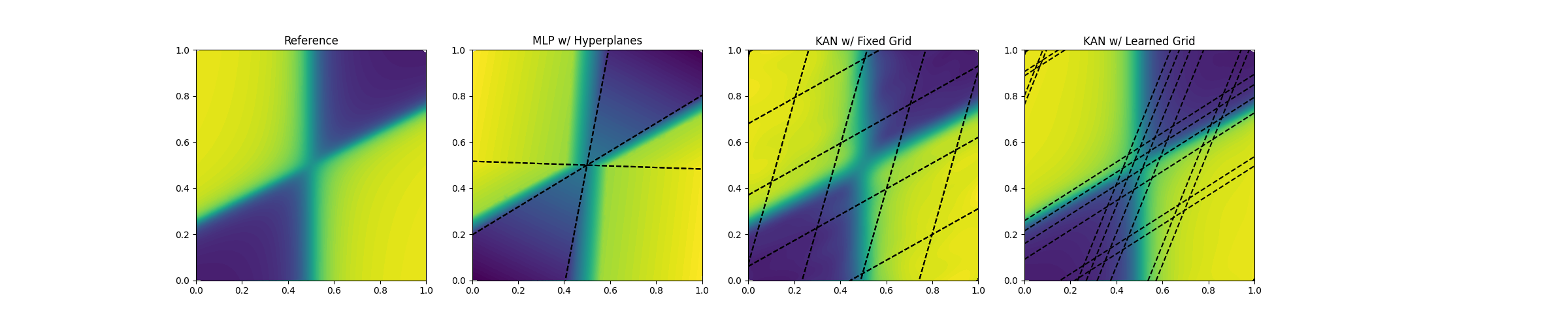}
    \hspace{-1.4cm}
    \includegraphics[trim={11.6cm 1.2cm 0cm 1.4cm},clip,width=0.06\textwidth]{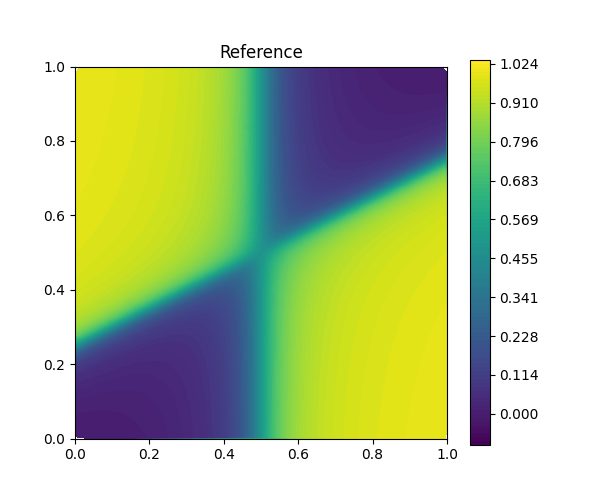}
    \caption{Comparison of ReLU MLP hyperplanes against the KAN spline grid corresponding to two hidden neurons on the nonsmooth function regression example (top) and XOR problem (bottom).}
    \label{fig:learned-grid-comparison-nonsmooth}
\end{figure}

Results are reported in Tables \ref{table:results-regression}-\ref{table:results-regression-xor-deep} and in Figure \ref{fig:convergence-history} in the supplemental materials. For both problems, we generally see that (1) the spline basis consistently outperforms the ReLU basis (and comparable MLPs); (2) we achieve better accuracies for comparable amounts of work when progressing to a multilevel training strategy; and (3) for the nonsmooth problem, where knot adaptation is necessary, the free-knot spline architectures outperform their fixed-knot counterparts by a significant margin.
The learned separating hyperplanes for  are shown in Figure \ref{fig:learned-grid-comparison-nonsmooth} for the nonsmooth problem (top) and the XOR problem (bottom), visualizing the benefits of refinement and knot adaptivity in KAN architectures.

\begin{table}[tbp!]
\caption{Accuracy for PINN problem, comparing the effects of geometric refinement of the spline knots for the KAN basis. Comparable MLPs are included. Standard deviations computed across random initializations ($N=5$).}
\label{table:results-pinn}
\vskip 0.15in
\begin{center}
\begin{small}
\begin{sc}
\begin{tabular}{lclcc}
\toprule
Type & Layers &  Fidelity  & \# Param. & Loss: mean (stdev) \\
\midrule
KAN & $[2,20,20,1]$  & coarse     &  3400 & $6.635 \times 10^{-3} \, (4.060 \times 10^{-3})$\\
KAN & $[2,20,20,1]$  & fine       &  9700 & $4.072 \times 10^{-3} \,(2.514 \times 10^{-3})$\\
KAN & $[2,20,20,1]$  & multilevel &  9700 & $\mathbf{2.402\times 10^{-5}} \,(1.897\times10^{-5})$ \\
\midrule
MLP & $[2,20,20,1]$  & & 500 & $ 2.334 \times 10^{-2} \, (1.740\times 10^{-3})$ \\
MLP & $[2,56,56,1]$  & & 3416 & $ 1.734\times 10^{-2} (7.989\times 10^{-3})$\\
\bottomrule
\end{tabular}
\end{sc}
\end{small}
\end{center}
\vskip -0.1in
\end{table}

We similarly experiment with different architectures for a PINN solving Burger's Equation, with a small perturbation by an elliptic problem for stability. 
We run for 10000 steps for the basis results; for refinement, we use $[3200,0,0]$ epochs for our coarse model, $[0,0,800]$ for the fine model (one level coarser than the regression problems), and $[800,400,200]$ for the multilevel model. We only show results using the spline basis with fixed knots for this problem.

Results are shown in Table \ref{table:results-pinn}. The KAN architectures - even the coarsest ones - outperform the comparable MLP architectures in terms of training loss; as in the regression case, we see the benefits of multilevel training in increased accuracy of several orders of magnitude.

\section{Conclusions}

While KANs are equivalent to multichannel MLPs posed in a spline basis, the spline basis provides a better-posed optimization problem than ReLU MLP equivalents, alongside better accuracy and faster training. While these ideas extend to other MLP-based architectures, we acknowledge that these methods are not applicable to all architectures, and that refinement is only considered in channel dimension, not with regards to depth or width -- these remain beyond the scope of this work.

In the spirit of \citep{shukla2024comprehensive, yu2024kan}, the incorporation of free-knot splines is arguably the ``fairest'' comparison to multichannel MLPs, due to the presence of both trainable weights and biases, and due to the induced preconditioning of the optimization problem, we see better performance of KANs under the same training methods.

\begin{ack}
This work was funded in part by the National Nuclear Security Administration Interlab Laboratory Directed Research and Development program under project number 20250861ER.
This paper describes objective technical results and analysis. Any subjective views or opinions that might be expressed in the paper do not necessarily represent the views of the U.S. Department of Energy or the United States Government. Sandia National Laboratories is a multimission laboratory managed and operated by National Technology and Engineering Solutions of Sandia, LLC., a wholly owned subsidiary of Honeywell International, Inc., for the U.S. Department of Energy's National Nuclear Security Administration under contract DE-NA-0003525. SAND2025-06102O. The research was performed under the auspices of the National Nuclear Security Administration of the U.S. Department of Energy at Los Alamos National Laboratory, managed by Triad National Security, LLC under contract 89233218CNA000001. LA-UR-25-24956.
\end{ack}




\bibliography{references}
\bibliographystyle{icml2025}

\newpage
\appendix
\onecolumn
\section{Notational Shift for MLP Architectures} 
\label{app:notational_shift}

Conventionally, an MLP with $L$ hidden layers, starting with an input $x = x^{(0)}$, has each layer $\ell=0,\dots,L-1$ expressed as
\begin{equation*}
x^{(\ell+1)} = \sigma\left( W^{(\ell)} x^{(\ell)} + b^{(\ell)} \right),
\end{equation*}
with the final output  
\begin{equation*} NN(x) = x^{(L+1)} = W^{(L)} x^{(\ell)}. \end{equation*}
This compositional structure can be broken down more granularly, as
\begin{equation} \begin{split}
x^{(\ell + \frac{1}{2})} &= W^{(\ell)} x^{(\ell)} \\
x^{(\ell + 1)} &= \sigma \left( x^{(\ell + \frac{1}{2})} + b^{(\ell)} \right),\\    
\end{split} \label{eq:granular} \end{equation}
where then the final layer omits the nonlinear activation step so that $ NN(x) = x^{(L + \frac{1}{2})}$ instead of $NN(x) = x^{(L+1)}.$

The rearranged layers used in our analysis, which post-multiply layers by learnable weights instead of pre-multiplying by learnable weights, simply combine the expressions in \eqref{eq:granular} for $\ell = 1, \dots, L-1$ into a single step,
\begin{equation} \begin{split}
x^{(\ell+\frac{1}{2})} &= W^{(\ell)} \sigma\left( x^{(\ell - \frac{1}{2})} + b^{(\ell-1)} \right),
\end{split} \end{equation}
yielding the same computation.

Notationally, we substitute $t^{(\ell)} = - b^{(\ell)}$, to match the conventions from literature on splines. This yields our ultimate expression for each layer, 
\begin{equation}
x^{(\ell+\frac12)} = W^{(\ell)} \sigma\left(x^{(\ell-\frac12)}-t^{(\ell-1)}\right).
\end{equation}


\section{Proof of NTK Spectra Comparison} 
\label{app:ntk_proof}

Consider the extended spline knots $T = \{t_{1-r} < \dots < t_{n+r-1} \}$, to define splines on the interval $[a,b] = [t_0, t_{n+1}]$. For $r=1$, define the functions
\begin{equation}
    b_i^{[1]}(x) = \begin{cases} 1 & x \in [t_i, t_{i+1}] \\
    0 & \text{else}\end{cases},
    \hspace{4ex}
    \psi_i^{[1]}(x) = \begin{cases} 1 & x \ge t_i \\ 0 & \text{else} \end{cases}.
\end{equation}
For $r >1$, we define the spline basis function recursively as before:
\begin{equation}
    b^{[r]}_i(x) = \frac{x - t_i}{t_{i+r-1} - t_i} b^{[r-1]}_i(x) + \frac{t_{i+r} - x}{t_{i+r} - t_{i+1}} b^{[r-1]}_{i+1}(x).
\end{equation}
Each function $b^{[r]}_i$ is supported on the interval $[t_i, t_{i+r}]$.
We define the $\text{ReLU}^{r-1}$ basis functions as
\begin{equation}
    \psi^{[r]}_i(x) = \text{ReLU}(x - t_i)^{r-1}.
\end{equation}

Define $A^{[r]} \in \mathbb{R}^{(n+r-1) \times (n+r-1)}$ as the change-of-basis matrix  between $B^{[r]} = \begin{bmatrix} b^{[r]}_{1-r}, \dots, b^{[r]}_{n-1} \end{bmatrix}^T$ and $\Phi^{[r]} = \begin{bmatrix} \psi^{[r]}_{1-r} , \dots, \psi^{[r]}_{n-1} \end{bmatrix}^T$ so that
\begin{equation}\label{eq:cob}
    B^{[r]} = A^{[r]} \Phi^{[r]}.
\end{equation}
It is easily verified that 
\begin{equation} \label{eq:A_1}
A^{[1]}_{ij} = \begin{cases} 1 & j = i \\ -1 & j = i+1 \end{cases}.
\end{equation}
The following lemma derives a recursive definition of $A^{[r]}$.

\begin{lemma}
    For $r > 1$, let $A^{[r-1]}$ denote splines of order $r-1$ constructed on knots for splines of order $r$. Then the matrix $A^{[r]}$ is defined entry-wise as:
    \begin{equation} \label{eq:A_recurrence}
         A^{[r]}_{ij} = \frac{1}{t_{i+r-1} - t_i} A^{[r-1]}_{ij} - \frac{1}{t_{i+r} - t_{i+1}} A^{[r-1]}_{i+1,j},
    \end{equation}
    where we define $A^{[r-1]}_{n+r,:} = 0$ to account for the special case of the final row $i=n+r-1$. 
\end{lemma}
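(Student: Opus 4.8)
The plan is to establish the recurrence by differentiating the change-of-basis identity (\ref{eq:cob}) and comparing the two sides in the lower-order $\relu^{r-2}$ basis. Two derivative facts drive the argument. First, the ReLU basis differentiates cleanly: since $\psi^{[r]}_i(x) = \relu(x-t_i)^{r-1}$, we have $\frac{d}{dx}\psi^{[r]}_i = (r-1)\,\relu(x-t_i)^{r-2} = (r-1)\,\psi^{[r-1]}_i$ (valid pointwise away from $t_i$, hence a.e.). Second, the B-spline basis obeys the classical derivative formula
\[
\frac{d}{dx} b^{[r]}_i = (r-1)\left( \frac{b^{[r-1]}_i}{t_{i+r-1}-t_i} - \frac{b^{[r-1]}_{i+1}}{t_{i+r}-t_{i+1}} \right),
\]
which follows from the recursion (\ref{eq:br-def}) by induction on $r$ (cf. \citep{deboor1978practical}). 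Note that the denominators here are exactly those appearing in (\ref{eq:br-def}) and in the claimed recurrence (\ref{eq:A_recurrence}).

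First I would differentiate the defining relation $b^{[r]}_i = \sum_j A^{[r]}_{ij}\psi^{[r]}_j$, obtaining $\frac{d}{dx} b^{[r]}_i = (r-1)\sum_j A^{[r]}_{ij}\psi^{[r-1]}_j$. Next I would substitute the B-spline derivative formula on the left and expand each lower-order spline in the lower-order ReLU basis via the order-$(r-1)$ change of basis $b^{[r-1]}_k = \sum_j A^{[r-1]}_{kj}\psi^{[r-1]}_j$, which gives
\[
\frac{d}{dx} b^{[r]}_i = (r-1)\sum_j \left( \frac{A^{[r-1]}_{ij}}{t_{i+r-1}-t_i} - \frac{A^{[r-1]}_{i+1,j}}{t_{i+r}-t_{i+1}} \right)\psi^{[r-1]}_j.
\]
Equating the two expansions and invoking the linear independence of $\{\psi^{[r-1]}_j\}$ (itself a consequence of Lemma \ref{lemma:basis} applied at order $r-1$) lets me match the coefficient of each $\psi^{[r-1]}_j$; the common factor $(r-1)$ cancels, yielding precisely (\ref{eq:A_recurrence}).

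Two points require care to make this airtight. Matching derivatives determines $A^{[r]}$ only up to an additive constant attached to each basis function, so I must separately pin down the constant of integration: both $b^{[r]}_i$ (supported on $[t_i,t_{i+r}]$) and every $\psi^{[r]}_j$ vanish for $x$ below all relevant knots, so the two sides of (\ref{eq:cob}) agree at such a point, and equality of derivatives then upgrades to equality of the functions. The main obstacle, however, is the index and dimension bookkeeping induced by passing from order $r$ to order $r-1$: one must adopt the lemma's convention that $A^{[r-1]}$ is built from order-$(r-1)$ splines on the order-$r$ knot set (so the two matrices share a column index range and the product is well posed), verify that $b^{[r-1]}_{i+1}$ and the row $A^{[r-1]}_{i+1,j}$ are defined for every admissible $i$, and handle the extreme row $i=n+r-1$—where the second term of the B-spline recursion drops out—via the stated padding $A^{[r-1]}_{n+r,:}=0$. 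Finally I would confirm that the strict ordering of $T$ keeps the denominators $t_{i+r-1}-t_i$ and $t_{i+r}-t_{i+1}$ nonzero throughout, so that no step degenerates.
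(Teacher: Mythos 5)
Your proof is correct in substance but takes a genuinely different route from the paper's. The paper argues by induction on $r$ using the Cox--de Boor \emph{value} recursion \eqref{eq:br-def}: it substitutes the order-$(r-1)$ change of basis into that recursion, converts the resulting products $(x-t_i)\,\relu(x-t_j)^{r-2}$ into $\relu(x-t_j)^{r-1}$ plus lower-order terms via the algebraic identity of Proposition \ref{lemma:relu_power_recursion}, and then disposes of the leftover $\relu(\cdot-t_j)^{r-2}$ terms by arguing they must cancel since $b^{[r]}_i\in C^{r-2}([a,b])$ while those terms are not. You instead differentiate the change-of-basis identity \eqref{eq:cob}, pairing the classical B-spline \emph{derivative} recursion with $\tfrac{d}{dx}\psi^{[r]}_i=(r-1)\psi^{[r-1]}_i$, and recover the matrix recurrence by coefficient matching in the order-$(r-1)$ basis plus a constant-of-integration argument. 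Your route buys a cleaner derivation---the cancellation step, which is the least transparent part of the paper's proof, never arises---at the price of importing the B-spline derivative formula (itself proved by an induction of comparable weight) and of extra care at $r=2$, where the derivatives land in discontinuous Heaviside functions and the matching must be done almost everywhere against functions that are only linearly independent modulo null sets. One point to tighten: the change of basis \eqref{eq:cob} is posited on $[a,b]=[t_0,t_n]$, but your anchoring point ``below all relevant knots'' lies outside that interval; you should either carry out the whole argument on $\mathbb{R}$, where the truncated-power representation of B-splines is valid, and then invoke uniqueness of the coefficients on $[a,b]$ (the $\psi^{[r]}_j$ being linearly independent there), or anchor at a point of $[a,b]$ with a more involved evaluation. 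This is a matter of bookkeeping rather than a gap in the essential idea.
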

\begin{proof}
First, we prove the following result regarding ReLU functions.  
\begin{proposition}\label{lemma:relu_power_recursion}
    For any scalars $a,b$, and for $r > 1$,
    \begin{equation}
        (x - a) \text{ReLU}(x - b)^r = \text{ReLU}(x-b)^{r+1} + (b-a) \text{ReLU}(x - b)^r.
    \end{equation}
\end{proposition} 
\begin{proof}
    If $x \le b$, then both sides equal zero. Thus, it suffices to consider $x > b$, where then
    \begin{equation*} \text{ReLU}(x-b) = x-b. \end{equation*}
    Thus, it suffices to show that 
    \begin{equation*} (x-a)(x-b)^r = (x-b)^{r+1} + (b-a)(x-b)^r. \end{equation*}
    Indeed, factoring the righthand side, we see
    \begin{equation} \begin{split}
        (x-b)^{r+1} + (b-a)(x-b)^r &= (x-b)^r \left( (x-b) + (b-a) \right) = (x-b)^r (x-a),
    \end{split} \end{equation}
    which is the desired result.
\end{proof}

We now prove the main statement by induction. For the base case of $r=2$, we have the known (e.g. \citep{hong2022activation}) result that
\begin{equation}
A^{[2]} = \begin{bmatrix}
\frac{1}{t_0 - t_{-1}} & -\left(\frac{1}{t_1 - t_0} + \frac{1}{t_0 - t_{-1}}\right) & \frac{1}{t_1 - t_0} & \\
&\frac{1}{t_1 - t_0} & -\left(\frac{1}{t_2 - t_1} + \frac{1}{t_1 - t_0}\right) & \frac{1}{t_2 - t_1} & \\
&\ddots & \ddots &\ddots & \\
&& \frac{1}{t_{n-1} - t_{n-2}} & -\left(\frac{1}{t_{n} - t_{n-1}} + \frac{1}{t_{n-1} - t_{n-2}}\right) & \frac{1}{t_{n} - t_{n-1}}  \\
&&& \frac{1}{t_{n} - t_{n-1}} & -\left(\frac{1}{t_{n+1} - t_{n}} + \frac{1}{t_{n} - t_{n-1}}\right)   \\
&&&&\frac{1}{t_{n+1} - t_{n}}  \\
\end{bmatrix},
\end{equation}
which can be rewritten entry-wise as
\begin{equation} \label{eq:A_2}
    A^{[2]}_{ij} = \begin{cases} \frac{1}{t_{i+1} - t_i} & j = i \\
    -\left(\frac{1}{t_{i+1} - t_i} + \frac{1}{t_{i+2}-t_{i+1}} \right) & j = i+1 \\
    \frac{1}{t_{i+2}-t_{i+1}} & j = i+2 \\
    0 & \text{else}
    \end{cases}.
\end{equation}
Directly from \eqref{eq:A_1} and \eqref{eq:A_2}, we see that $A^{[2]}$ satisfies Equation \eqref{eq:A_recurrence},
\begin{equation} \begin{split}
    A^{[2]}_{ij} &= \frac{1}{t_{i+1}-t_i} A_{ij}^{[1]} - \frac{1}{t_{i+2}-t_{i+1}} A_{i+1,j}^{[1]}.
\end{split} \end{equation}

We proceed now to the inductive step for $r \ge 2$. Suppose the result holds through $s=1,\dots,r-1$. By the recurrence relation to define the spline functions $b^{[r]}_i$ \eqref{eq:br-def}, we have
\begin{equation} \begin{split}
    b^{[r]}_i(x) &= \frac{x - t_i}{t_{i+r-1} - t_i} b^{[r-1]}_i(x) + \frac{t_{i+r} - x}{t_{i+r} - t_{i+1}} b^{[r-1]}_{i+1}(x) \\
    &= \left( \frac{1}{t_{i+r-1} - t_i} \right) \sum_{j} A^{[r-1]}_{ij} \left( x- t_i \right)  \text{ReLU}(x - t_j)^{r-2} \\
    &\qquad\qquad - \left(\frac{1}{t_{i+r} - t_{i+1}} \right) \sum_{j} A^{[r-1]}_{i+1,j} \left( x - t_{i+r} \right) \text{ReLU}(x - t_j)^{r-2}.
\end{split} \end{equation}
By Lemma \ref{lemma:relu_power_recursion}, 
\begin{equation} \begin{split}
    b^{[r]}_i(x) 
    &= \left( \frac{1}{t_{i+r-1} - t_i} \right) \sum_{j} A^{[r-1]}_{ij} \left( x- t_i \right)  \text{ReLU}(x - t_j)^{r-2} \\
    &\qquad\qquad - \left(\frac{1}{t_{i+r} - t_{i+1}} \right) \sum_{j} A^{[r-1]}_{i+1,j} \left( x - t_{i+r} \right) \text{ReLU}(x - t_j)^{r-2} \\
    &= \left( \frac{1}{t_{i+r-1} - t_i} \right) \sum_{j} A^{[r-1]}_{ij} \left( \text{ReLU}(x - t_j)^{r-1} + (t_j - t_i) \text{ReLU}(x - t_j)^{r-2} \right) \\
    &\qquad\qquad - \left(\frac{1}{t_{i+r} - t_{i+1}} \right) \sum_{j} A^{[r-1]}_{i+1,j} \left( \text{ReLU}(x - t_j)^{r-1} + (t_j - t_{i+r}) \text{ReLU}(x - t_j)^{r-2} \right).
\end{split}
\end{equation}
Since $b_i^r \in C^{r-2}([a,b])$ and $\text{ReLU}(\, \cdot - t_j)^{r-1} \in C^{r-2}([a,b])$ but $\text{ReLU}(\, \cdot - t_j)^{r-2} \notin C^{r-2}([a,b])$,
the coefficients of the $\text{ReLU}(\cdot - t_j)^{r-2}$ terms must cancel. This is specifically because expanding the $\text{ReLU}(\, \cdot - t_j)^{r-2}$ terms as piecewise polynomials, none of the expanded terms are sufficiently differentiable at the knot, and thus all terms must cancel for the full summation to be in $C^{r-2}([a,b])$. This can also be shown directly through laborious arithmetic calculation. Therefore,
\begin{equation} \begin{split}
    b^{[r]}_i(x) 
    &= \left( \frac{1}{t_{i+r-1} - t_i} \right) \sum_{j} A^{[r-1]}_{ij} \text{ReLU}(x - t_j)^{r-1}  - \left(\frac{1}{t_{i+r} - t_{i+1}} \right) \sum_{j} A^{[r-1]}_{i+1,j} \text{ReLU}(x - t_j)^{r-1} \\
    &= \sum_{j} \left( \frac{1}{t_{i+r-1} - t_i}   A^{[r-1]}_{ij}  - \frac{1}{t_{i+r} - t_{i+1}}  A^{[r-1]}_{i+1,j} \right) \text{ReLU}(x - t_j)^{r-1},
\end{split} \end{equation}
which completes the proof.
\end{proof}



\begin{corollary}[Uniform knots]\label{cor:uni}
In the case of uniform knots, with spacing $h = t_i - t_{i-1} = \frac{1}{n}$, $A^{[r]}$ takes the direct form
\begin{equation}\label{eq:uni-Ar}
A^{[r]} = \frac{h^{1-r}}{(r-1)!} (A^{[1]})^r.
\end{equation}
Moreover, $A^{[r]}$ is upper triangular Toeplitz, with 
entries
    \begin{equation}\label{eq:uni-Ar_ij}
        A^{[r]}_{ij} = \begin{cases}
            \frac{(-1)^{i-j} }{(i-j)!\,(r-i+j)!} \, \frac{r}{h^{r-1}}  & i \le j \le i+r \\
            0 & \text{else}
        \end{cases}.
    \end{equation}
\end{corollary}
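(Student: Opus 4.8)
The plan is to prove the two assertions in turn: first the compact factorization \eqref{eq:uni-Ar}, by induction on $r$ from the already-established entry-wise recurrence \eqref{eq:A_recurrence}; and then the closed-form entries \eqref{eq:uni-Ar_ij}, by expanding $(A^{[1]})^r$ with the binomial theorem. The single observation that drives everything is that, by \eqref{eq:A_1}, one can write $A^{[1]} = I - S$ where $S$ is the first-superdiagonal shift matrix ($S_{ij} = \delta_{i+1,j}$, Kronecker delta), so that left-multiplication by $A^{[1]}$ is exactly the forward row difference $(A^{[1]} M)_{ij} = M_{ij} - M_{i+1,j}$.

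For the induction, I would take $r=1$ as the (trivial) base case, since $\frac{h^{0}}{0!}(A^{[1]})^1 = A^{[1]}$. For the step, the key simplification is that uniform spacing makes both denominators in \eqref{eq:A_recurrence} equal, $t_{i+r-1}-t_i = t_{i+r}-t_{i+1} = (r-1)h$, so the recurrence collapses to
\begin{equation*}
A^{[r]}_{ij} = \frac{1}{(r-1)h}\bigl( A^{[r-1]}_{ij} - A^{[r-1]}_{i+1,j}\bigr) = \frac{1}{(r-1)h}\bigl(A^{[1]} A^{[r-1]}\bigr)_{ij}.
\end{equation*}
Inserting the inductive hypothesis $A^{[r-1]} = \frac{h^{2-r}}{(r-2)!}(A^{[1]})^{r-1}$ and combining scalars through $\frac{1}{(r-1)h}\cdot\frac{h^{2-r}}{(r-2)!} = \frac{h^{1-r}}{(r-1)!}$ then gives \eqref{eq:uni-Ar}.

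To read off the entries, I would expand $(A^{[1]})^r = (I-S)^r = \sum_{k=0}^{r}(-1)^k\binom{r}{k}S^k$. Since $S^k$ places ones on the $k$-th superdiagonal, $(S^k)_{ij} = \delta_{i+k,j}$, the sum is upper triangular and Toeplitz, with $(A^{[1]})^r_{ij} = (-1)^{j-i}\binom{r}{j-i}$ for $0 \le j-i \le r$ and zero otherwise. Multiplying by the prefactor $\frac{h^{1-r}}{(r-1)!}$, expanding $\binom{r}{j-i} = \frac{r!}{(j-i)!\,(r-j+i)!}$, and using $r\,h^{1-r} = r/h^{r-1}$ then yields the stated entries \eqref{eq:uni-Ar_ij} (note that for $i \le j$ the factorials are naturally indexed by $j-i$).

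The hard part will not be any single calculation but the dimensional bookkeeping: the size of $A^{[r]}$ grows with $r$, so in the recurrence the symbol $A^{[r-1]}$ must be read as the order-$(r-1)$ change-of-basis matrix built on the \emph{order-$r$} knot set, precisely as the preceding Lemma stipulates. With that convention both sides of \eqref{eq:uni-Ar} live in a common square matrix space, and the identification $A^{[r-1]}_{ij} - A^{[r-1]}_{i+1,j} = (A^{[1]}A^{[r-1]})_{ij}$ holds at every row once the boundary truncation $A^{[r-1]}_{n+r,:} = 0$ is imposed on the final row. Verifying that this truncation is consistent with $A^{[1]} = I - S$ (whose last row carries no superdiagonal entry) at each level of the induction is the step I would check most carefully; the remaining manipulations are routine.
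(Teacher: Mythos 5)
Your proof is correct and follows essentially the same route as the paper: uniform spacing makes both denominators in \eqref{eq:A_recurrence} equal to $(r-1)h$, so the recurrence is left-multiplication by $\tfrac{1}{(r-1)h}A^{[1]}$, which iterated gives \eqref{eq:uni-Ar}, and the Toeplitz entries follow from $A^{[1]}$ being upper bidiagonal Toeplitz. The only differences are cosmetic --- you expand $(A^{[1]})^r=(I-S)^r$ binomially (with $S$ the superdiagonal shift) where the paper cites an external Toeplitz lemma, and your parenthetical about indexing by $j-i$ correctly flags that the factorials in \eqref{eq:uni-Ar_ij} should read $(j-i)!\,(r-j+i)!$ rather than as literally printed.
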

\begin{proof}
    This follows by noting that if $t_{i+r-1}-t_i = t_{i+r}-t_{i+1} = h(r-1)$, \eqref{eq:A_recurrence} simply corresponds to a left scaling of $A^{[r-1]}$ by $\tfrac{1}{h(r-1)}A^{[1]}$ \eqref{eq:A_1}. Repeating this for $r=2,3,...$ yields \eqref{eq:uni-Ar}. For the Toeplitz property, from \eqref{eq:uni-Ar} we see that $A^{[r]}$ is a power of $A^{[1]}$. Since $A^{[1]}$ is an upper bidiagonal Toeplitz matrix, it follows that $A^{[r]}$ is also upper triangular Toeplitz (see, e.g. \cite[Lemma 11]{southworth2019necessary}), with coefficients given in \eqref{eq:uni-Ar_ij}.
\end{proof}
    
For $r=2$, this is exactly the matrix in \citep{hong2022activation}:
\begin{equation}
A =    \frac1h \begin{bmatrix} 1 & -2 & 1 \\ & 1 & -2 & 1 \\ &&\ddots & \ddots & \ddots \\
&&& 1 & -2 & 1 \\ &&&& 1 & -2 \\ &&&&& 1 \end{bmatrix}.
\end{equation}

\begin{theorem} For symmetric positive semidefinite $M$ let $\rho(M)\geq 0$ denote the spectral radius. 
Then for uniform knot spacing,   
\begin{equation}
    \rho(\text{NTK}_S^{[r]}) \leq 4\rho(\text{NTK}_R).
\end{equation}
\end{theorem}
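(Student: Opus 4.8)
The plan is to convert the spectral-radius inequality into a modewise comparison of Toeplitz symbols, exploiting that for uniform knots every relevant operator is (block) Toeplitz. First I would record the factorization $\text{NTK}_S^{[r]} = (J\mathbb{A})(J\mathbb{A})^T$, so that $\rho(\text{NTK}_S^{[r]}) = \|J\mathbb{A}\|_2^2$ and $\rho(\text{NTK}_R) = \|J\|_2^2$; the claim is then equivalent to the operator-norm bound $\|J\mathbb{A}\|_2 \le 2\|J\|_2$. It is important to note that the naive submultiplicative estimate $\|J\mathbb{A}\|_2 \le \|J\|_2\,\|\mathbb{A}\|_2$ is far too weak: by Corollary \ref{cor:uni}, $\mathbb{A}$ is block diagonal with blocks $A^{[r]} = \tfrac{h^{1-r}}{(r-1)!}(A^{[1]})^r$, whose spectral norm grows like $h^{1-r}2^r$ and diverges under grid refinement. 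The constant $4$ must therefore be extracted from the \emph{joint} structure of $J$ and $\mathbb{A}$, using the fact that $\mathbb{A}$ is an $r$-th order difference operator that annihilates precisely the low-frequency direction carrying the bulk of the $\relu^{r-1}$ NTK spectrum.

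Next I would reduce to a symbol comparison. Since $J\mathbb{A}\mathbb{A}^TJ^T$ and $\mathbb{A}^T(J^TJ)\mathbb{A}$ have the same nonzero eigenvalues, and since the change of basis $b^{[r]}_i = \sum_j A^{[r]}_{ij}\psi^{[r]}_j$ identifies $\mathbb{A}^T(J^TJ)\mathbb{A}$ with the B-spline NTK and $J^TJ$ with the $\relu^{r-1}$ NTK, it suffices to bound the largest eigenvalue of the spline NTK by $4$ times that of the ReLU NTK. For uniform knots both operators are Toeplitz in the channel index and are simultaneously diagonalized by the Fourier modes $e^{\mathrm{i}k\theta}$. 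Writing $s(\theta)$ for the symbol of the $\relu^{r-1}$ NTK and $a(\theta) = \tfrac{h^{1-r}}{(r-1)!}(1-e^{\mathrm{i}\theta})^r$ for the symbol of $A^{[r]}$ (Corollary \ref{cor:uni}), the spline NTK has symbol $|a(\theta)|^2 s(\theta)$, so the theorem becomes the scalar inequality $\max_\theta |a(\theta)|^2 s(\theta) \le 4\,\max_\theta s(\theta)$.

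To finish, the constant $4 = \max_\theta |1-e^{\mathrm{i}\theta}|^2$ is exactly the top of the discrete second-difference (Laplacian) spectrum, and I would obtain it from three observations: (i) $s$ is maximized at the DC mode $\theta=0$, since the dominant NTK direction is the constant/low-frequency one; (ii) $a(0)=0$, so the spline symbol vanishes at DC, reflecting that an $r$-th difference kills constants; and (iii) the growth of $|a(\theta)|^2 = \tfrac{h^{2-2r}}{((r-1)!)^2}(2-2\cos\theta)^r$ toward $\theta=\pi$ is compensated by the decay of $s(\theta)$, their product being the uniformly bounded spline Gram symbol. The main obstacle is step (iii): pinning the worst-case per-mode amplification to exactly $4$, uniformly in $r$ and in the knot count, which requires the explicit $\relu^{r-1}$ NTK symbol and a careful accounting of how each factor $(1-e^{\mathrm{i}\theta})$ in $a$ is absorbed by the decay of $s$. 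A secondary technicality is that for finite data and through depth the NTK is only asymptotically Toeplitz; I would handle this either by passing to the large-data Gram limit $G_* = \int_\Omega \Phi_*(x)\Phi_*(x)^T\,d\mu(x)$ from Section \ref{sec:preconditioning}, or by an interlacing argument transferring the symbol bound to the finite matrices.
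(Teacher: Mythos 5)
Your proposal diverges from the paper's proof at the very first step, and that divergence is where the gap lies. The paper \emph{does} use the ``naive'' submultiplicative bound $\|J\mathbb{A}\|\le\|J\|\,\|\mathbb{A}\|$ that you dismiss as far too weak; the trick you are missing is that the reference $\relu^{r-1}$ basis is first replaced by the rescaled but equivalent basis $\varphi_i^{[r]}(x)=\relu\bigl((x-t_i)/h\bigr)^{r-1}$, which is legitimate because $\relu$ is positively homogeneous. With that normalization the change-of-basis blocks become $\widetilde A^{[r]}=h^{r-1}A^{[r]}=\tfrac{1}{(r-1)!}(A^{[1]})^r$, whose norm is $h$-independent; the Toeplitz generating-function bound (Theorem \ref{th:toeplitz2}) applied to $\widetilde A^{[r]}$ alone gives $\|\widetilde A^{[r]}\|\le\max_x|1-e^{\mathrm{i}x}|^r/(r-1)!=2^r/(r-1)!\le 4$, and submultiplicativity finishes. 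So your objection that $\|\mathbb{A}\|\sim h^{1-r}2^r$ diverges under refinement is answered by renormalizing the basis, not by the joint analysis you propose. (You are right, however, that the stated constant on spectral radii corresponds to $\|J\mathbb{A}\|\le 2\|J\|$ at the singular-value level, whereas the submultiplicative route only yields $\|J\mathbb{A}\|\le 4\|J\|$, i.e.\ a factor $16$ on $\rho$; the paper's own bookkeeping is loose on exactly this point, and your reading of what must be proved is the more careful one.)

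The joint symbol comparison you propose instead does not close and rests on structure the paper never establishes. You need $J^TJ$ to be (block) Toeplitz in the channel index and simultaneously diagonalized with $\mathbb{A}$ by Fourier modes; the NTK of a deep network evaluated on finite, arbitrarily distributed data has no such structure, and your step (iii) further conflates the NTK symbol $s(\theta)$ with the B-spline Gram symbol --- these coincide only in the shallow least-squares setting of \eqref{eq:lstsq}, not for the $\text{NTK}_S^{[r]}$ of the theorem, which involves the Jacobian through all layers. Observations (i) and (ii) (that $s$ peaks at the DC mode and that $a(0)=0$) are plausible heuristics for why the bound should be far from tight, but the quantitative step --- showing the growth of $|a(\theta)|^2$ toward $\theta=\pi$ is absorbed by the decay of $s(\theta)$ with a uniform constant $4$ --- is precisely what you leave as ``the main obstacle,'' and no mechanism is offered for carrying it through depth or for the finite-data, non-Toeplitz case beyond an unspecified interlacing argument. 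As written, the proposal correctly identifies the target inequality but does not prove it, while missing the elementary rescaling that makes the paper's much simpler argument work.
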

\begin{proof}
Define an auxiliary equivalent ReLU$^{r-1}$ basis 
\begin{equation}
\varphi_i^{[r]}(x) = \text{ReLU}\left( \frac{x - t_i}{h} \right)^{r-1}.
\end{equation}
As ReLU is a homogeneous function with respect to positive scalars, this basis is equivalent again to the ReLU basis.

We therefore repeat the process of the above construction, yielding an equivalent change-of-basis matrix $\widetilde{A}^{[r]}$, where
\begin{equation}
\widetilde{A}^{[r]} = \frac{1}{h^{1-r}} A^{[r]}.
\end{equation}
Let $J$ denote the Jacobian of $f$ with respect to the weights $W$, so that
\begin{equation} \begin{split}
\text{NTK}_R &= J J^T \\
\text{NTK}_S^{[r]} &= J \mathbb{A}\mathbb{A}^T J^T,
\end{split} \end{equation}
where $\mathbb{A}$ is the reordered block diagonal operator of layer-specific $A^{[r]}$ \eqref{eq:bbW}.  

Note that $JJ^T$ and $J\mathbb{A}\mathbb{A}^TJ^T$ are symmetric with nonnegative eigenvalues, and largest eigenvalue given by the square of the largest singular value of $J$ and $J\mathbb{A}$, respectively. Noting that the largest singular value of an operator is given by the $\ell^2$-norm, by sub-multiplicativity $\|J\mathbb{A}\| \leq \|J\|\|\mathbb{A}\|$. Thus we will bound the spectrum of $JA$ by considering the maximum singular value of the change of basis matrix $\mathbb{A}$. Recall in the correct ordering $\mathbb{A}$ is a block-diagonal matrix, with diagonal blocks given by $\widetilde{A} ^{[r]}$ for each layer. The maximum singular value of a block-diagonal matrix is given by the maximum over maximum singular values of each block. Thus, we will proceed by considering $\|\widetilde{A}^{[r]}\|$ for arbitrary $P,Q$ and fixed $r$. 



From Corollary \ref{cor:uni}, for uniform knots $\widetilde{A}^{[r]}$ is upper triangular Toeplitz. Appealing to Toeplitz matrix theory, asymptotically (in $n$) tight bounds can be placed on the maximum singular value of $\widetilde{A}^{[r]}$ (and thus equivalently the maximum eigenvalue of $\widetilde{A}^{[r]}(\widetilde{A}^{[r]})^T$) by way of considering the operator's generator function. 
Let $\alpha_\ell$ denote the Toeplitz coefficient for the $i$th diagonal of a Toeplitz matrix, where $\alpha_0$ is the diagonal, $\alpha_{-1}$ the first subdiagonal, and so on. Then the Toeplitz matrix corresponds with a Fourier generator function,
\begin{align*}
{F}(x) & = \sum_{\ell=-\infty}^\infty \alpha_\ell e^{-\mathrm{i}\ell x}.
\end{align*}
The following theorem introduces a specific result from the field of block-Toeplitz operator theory.
\begin{theorem}[Maximum singular value of block-Toeplitz operators \cite{tilli1998singular}]\label{th:toeplitz2}
Let $T_N(F)$ be an $N\times N$ block-Toeplitz matrix, with continuous generating function $F(x): [0,2\pi]\to\mathbb{C}^{m\times m}$.
Then, the maximum singular value is bounded above by
\begin{align*}
    \sigma_{max}(T_N(F)) \leq \max_{x\in[0,2\pi]} \sigma_{max}(F(x)),
\end{align*}
 for all $N\in\mathbb{N}$.
\end{theorem}
In our case we have scalar (i.e. blocksize $N=1$) generating function of $\widetilde{A}^{[r]}$ given by
\begin{equation}
F_r(x) = r \sum_{\ell=0}^r \frac{(-1)^{\ell} }{(\ell)!\,(r-\ell)!} \, e^{-\mathrm{i}\ell x} = r \cdot \frac{(1-e^{\mathrm{i}x})^r}{r!}
= n^{r-1}\frac{(1-e^{\mathrm{i}x})^r}{(r-1)!}.
\end{equation}
Thus we have for $r\in\mathbb{Z}^+$,
\begin{equation}
    \|\widetilde{A}^{[r]}\| \leq \max_x |F_r(x)| = \frac{2^r}{(r-1)!} \leq 4
    \quad\text{for }r\geq1,
\end{equation}
which completes the proof.
\end{proof}

\section{Practical Considerations}
\subsection{Domain Mismatch of Consecutive Layers}
\label{app:mismatch}

If there are splines that are unsupported by data, i.e., there are no data points that lie in the support of the splines, that spline always evaluates to zero and therefore does not contribute in any way to the loss. As a result, the gradient of that spline's weights with respect to the loss will necessarily be zero. The arising issue is a variant of the \emph{dead weight} or \emph{dying ReLU} problem, as described in \citep{clevert2015fast, evci2018detecting} and elsewhere. 

This issue was considered early in KAN literature, and many solutions have arisen, including adaptive spline knot schemes \citep{liu2024kan, liu2024kan2} and batch / layer normalization \citep{ioffe2015batch, ba2016layer}. 

Many adaptive spline knot schemes weight grid points based on the image of the previous layer; for example, \citep{liu2024kan, liu2024kan2} both use an elastic deformation that yields a new set of spline knots by interpolating between the current set of knots and a uniform partition of the histogram of outputs of the previous layer. Such a scheme is empirically driven and is difficult to interpret in terms of traditional machine learning methods or mathematical underpinnings. In contrast, the basis equivalence in Lemma \ref{lemma:basis} suggests that using trainable biases in a multichannel ReLU network is equivalent to free-knot spline approximation. Using this idea, and methods presented in \citep{actor2022polynomial}, we implement KANs with free-knot splines, whose knots are trainable alongside the weights at each layer; in Section \ref{sec:experiments} we show that such KANs outperform traditional KANs with fixed spline grids. This begins to address the question of dependence of the overall accuracy on the spline grids that was voiced in \citep{howard2024multifidelity, abueidda2024deepokan}.

As for normalization schemes, the most popular normalization schemes are imperfect and do not guarantee exact coverage that the image of one layer is the domain of activations at the next layer. If batch normalization of layer inputs is used to map the inputs to a normal $N(0,1)$ distribution, assuming a normal distribution of inputs, we guarantee it is likely that our data will fall in the splines' domain -- 99.7\% of values will lie in the interval $[-3,3]$ -- but it is comparatively unlikely that a specific input will fall in the support of the outermost splines. If we consider a spline supported on an interval $[t_i, t_{i+1}]$, the batch size necessary to make the probability of at least one point in the batch to fall in this interval to be larger than a prescribed value $\tau$ is given by the relationship
\begin{equation}
    \text{batch size} \ge \frac{\log\left( 1 - \tau\right)}{\log \left( \mathbb{P}\left( x \notin [t_i, t_{i+1}] \right) \right)}.
\end{equation}
Thus, if we have a KAN with $n=5$ on a uniform grid of spline knots (so that the set of knots $T=\{-3, -2, -1, 0, 1, 2, 3\}$) and we would like to have at least one training point fall in the support of the first and last splines with probability $\tau = 0.999$, we require a batch size of at least approximately 320, which might be prohibitively large depending on model size, the provided hardware, and the task at hand.


Alternatively, one can construct a uniform normalization, instead of normalizing to a mean-zero, unit-variance distribution, by computing the maximum and minimum of the values of each batch and applying the affine transformation to map the interval into a fixed domain, e.g. $[-1,1]$, which supports the splines. 

Our regression experiments in Section \ref{sec:experiments} employ this uniform normalization, while the Burger's Equation results use batch normalization with a grid interval of $[-4,4]$.

\subsection{Linear Dependence of Spline Superpositions}
\label{app:nullspace}
At each layer, there is a nullspace that arises in KAN bases formed by B-splines (and the equivalent ReLU basis). This linear dependence is clearly illustrated through the following example. Suppose we aim to reconstruct the constant function $f(x_1, x_2) = 1$ with a KAN with \emph{no hidden layers}, using $r$-order splines with fixed knots $T$. Using the spline basis $B_S$, we aim to find weights $V \in \mathbb{R}^{(n+r-1) \times 2}$ so that the reconstruction
\begin{equation*} \begin{split}
\widehat{f}(x_1, x_2) &= \sum_{i=1-r}^{n-1} \sum_{p=1}^2 V_{pi} b_i(x_p) \\
&= \sum_{i=1-r}^{n-1} V_{1,i} b_i(x_1) + \sum_{i=1-r}^{n-1} V_{2,i} b_i(x_2)
\end{split} \end{equation*}
minimizes the prescribed loss function.
By definition, B-splines obey a partition of unity property, i.e. $\forall x_p \in [a,b]$, 
\begin{equation*}
\sum_{i=1-r}^{n-1} b_i(x_p) = 1
\end{equation*}
Thus, for any $\alpha \in \mathbb{R}$, setting $V_{1,i} = \alpha$  and $V_{2,i} = 1-\alpha$ for all $i$, all yield a perfect reconstruction of the target constant function $f$.

This example illustrates the linear dependence inherent to the KAN spline basis, failing to represent constants uniquely \emph{at every layer}. Since the basis of each layer is linearly dependent, the Hessian of the objective function with respect to the weights is necessarily rank-deficient, slowing down convergence when approaching a local minimum \citep{nocedal1999numerical}. As a result, regularization strategies such as weight decay are \emph{necessary} for KANs to train well.

\subsection{KANs in Other Bases}
\label{sec:other-bases}
A similar expansion in a fixed basis can be explicitly written for other choices of KAN bases, namely wavelet bases \citep{bozorgasl2024wav, shukla2024comprehensive} and Chebyshev polynomial bases \citep{ss2024chebyshev, guo2024physics, shukla2024comprehensive}. In particular, when using Chebyshev polynomials as a basis for KANs, the use of function compositions does not enrich the approximation space: the composition of polynomial functions remains a polynomial, albeit of a higher degree. This aligns with early theory on the Kolmogorov Superposition Theorem, which unambiguously states that polynomials used in compositional structures fail to yield an MLP structure with universal approximation properties \citep{vitushkin1964some, vitushkin2004hilbert, cybenko1989approximation}. If one were to turn to Chebyshev polynomials for a KAN basis, one must also contend with a decrease in accuracy when using a higher-order polynomial basis, as noted in \citep{ss2024chebyshev}; this effect is possibly due to ill-conditioning when using only single-precision floating point numbers with high-degree polynomials - an increasingly common limitation on GPU hardware.

\subsection{Computational Cost}
\label{app:computational-cost}
The insight connecting splines to ReLU networks yields an algorithmic construction of a spline basis that replaces the Cox-De Boor formula with ReLU activations raised to the desired spline power. The Cox-De Boor formula has been observed to be computationally expensive \citep{qiu2024relu}, and this reformulation provides an impressive speedup to the underlying computational graph. While a forward-pass through the standard B-spline basis takes $O(PQ(nr+r^2))$ floating-point operations per layer, the ReLU-based formulation requires only order $O(PQ(n+r))$ operations, resulting in a speedup by a factor equal to the spline degree. This removes the need to implement different versions of the spline activations as in \citep{qiu2024relu, so2024higher}. 

The change-of-basis operation, i.e., multiplication by $A$, requires only $O(nr)$ operations, since $A$ is a banded matrix with $r+1$ nonzero diagonals (see Appendix \ref{app:ntk_proof} for explicit construction), and is therefore negligible next to the cost of the contraction against the learnable weights. On a set of predetermined spline knots, the matrix $A$ can additionally be implemented via a convolution stencil. For trainable knots presented later, the entries of $A$ must then be assembled after each gradient update, although there is an additional cost incurred due to the need to backpropagate gradients during training of the spline knot parameters.

\section{Computing Details and Reproducibility}
\label{app:hyperparameters}
For our examples, we run our models with a fixed seed. We use machines with Nvidia A100 GPUs for our code, which expedited training time. Code for PINN simulation relied on hyperparameter selection from the \texttt{jaxpi} library \citep{wang2023expert, wang2024piratenets, wang2025gradient}, but otherwise relied on our own implementation.

Details about network architectures are provided in the main text; details about the normalization to match the image of a spline layer to lie in the domain of the next layer are provided in Section \ref{app:mismatch}.
For all regression examples we utilize an input dataset
with 20000 uniformly randomly generated points on the domain [0.0001,0.9999].
All regression output data are normalized by an affine transformation to the interval [0,1]. The PINNs problems use space-time collocation points on a $64\times 64$ grid and use the entire batch each step. 
The random number generator seed for the dataset is 0, and the batch size is all 20000 data points.
The Adam optimizer uses a learning rate of $10^{-3}$ and LBFGS uses a learning rate of 1.0; LBFGS uses a gradient tolerance of $10^{-12}$; otherwise, all optimizer parameters are left at their default settings. For the PINN problems, we us weight-regularized Adam and additionally employ a exponentially-cyclic learning rate scheduler ranging form $0.0001$ to $0.001$ with a 100-step cycle and a decay parameter $\gamma = 0.9995$, as done in other places, e.g. \citep{wang2023expert}.

For the nonsmooth regression problem, the random number generator seeds for model generation are fixed at 1234, and the ensemble of five runs utilizes seeds 1234 through 1238; the xor regression problem uses seeds 1232 through 1236; for the PINN example, seeds starting at 1234 and increase by 10 for each new run.

\section{Convergence History}
\label{app:convergence}
Two selections of convergence histories for the function regression examples under the same amount of training work are shown in \ref{fig:convergence-history} to illustrate the impact of refinement on training history. The points where the multilevel model loss stagnates before abruptly dropping correspond to the epochs where the model is refined.
\begin{figure}[htbp!]
    \centering
    \includegraphics[width=0.7\textwidth]{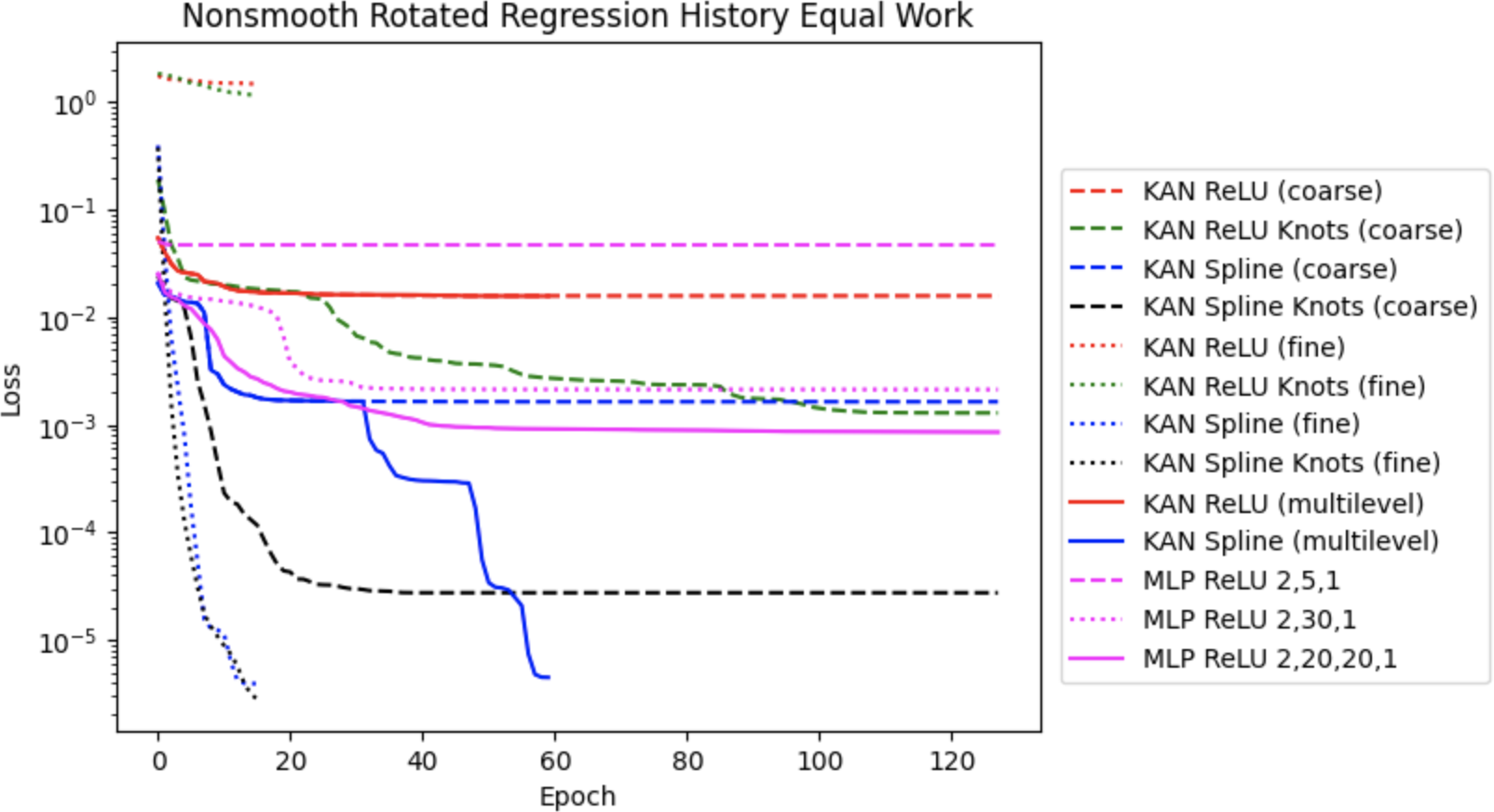}
    \includegraphics[width=0.7\textwidth]{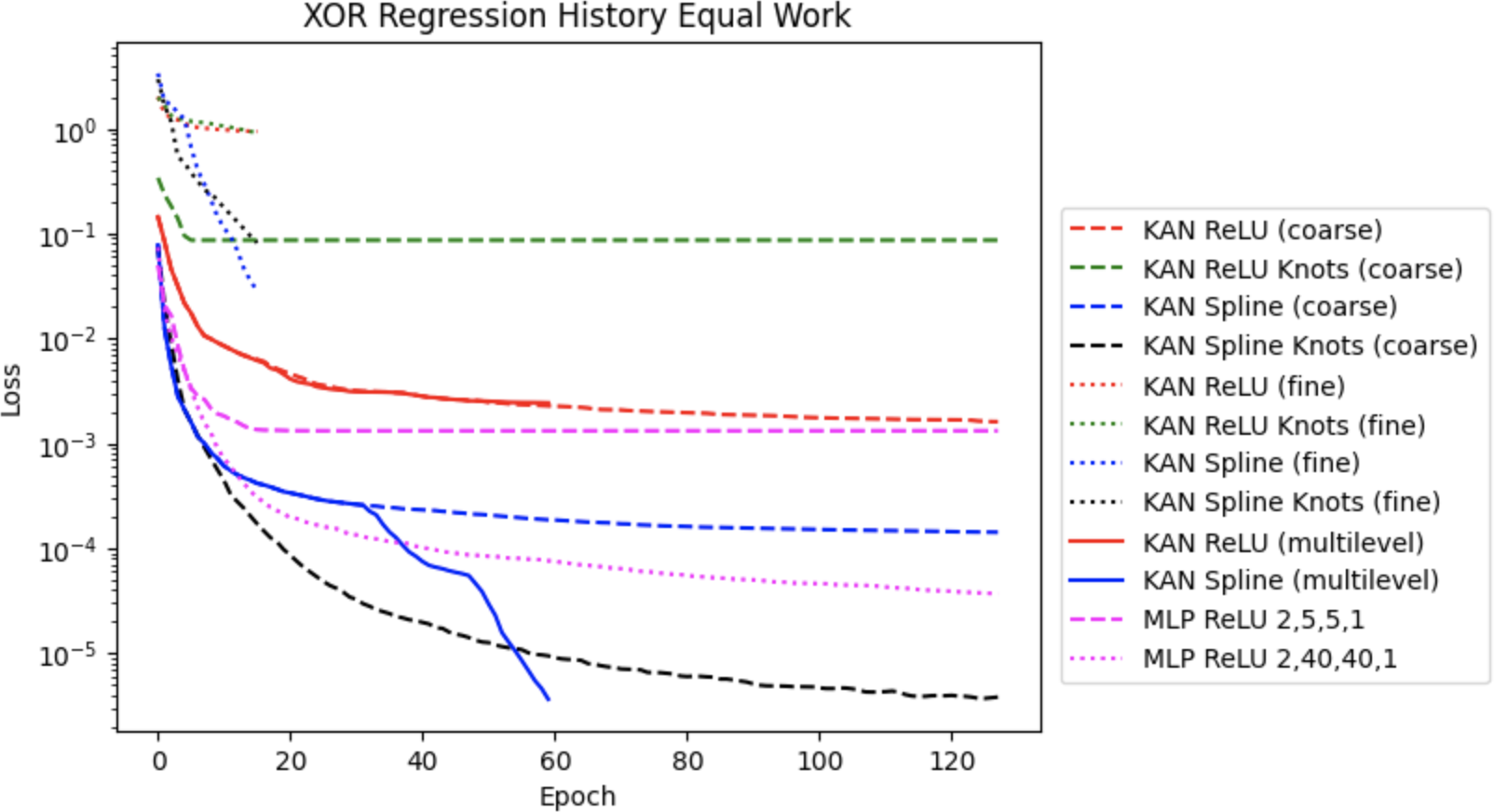}
    \caption{Select convergence history for the nonsmooth and XOR regression examples under approximately same amount of work for all different models.}
    \label{fig:convergence-history}
\end{figure}

\end{document}